\newcounter{theorem}
\newtheorem{proposition}[theorem]{Proposition}
\newtheorem{thm}[theorem]{Theorem}
\title{Position: Pause Recycling LoRAs and Prioritize Mechanisms to Uncover Limits and Effectiveness}
\author{%
Mei-Yen Chen \thanks{Equal contribution} \thanks{Corresponding author: mei-yen.chen@mercedes-benz.com} $^2$ \quad Thi Thu Uyen Hoang $^{*2}$ \quad Michael Hahn $^{*1}$ \quad M. Saquib Sarfraz $^2$ \\
$^1$ Saarland University \\
$^2$ Mercedes-Benz Tech Innovation
}
\begin{document}

\maketitle

\begin{abstract}

Merging or routing low-rank adapters (LoRAs) has emerged as a popular solution for enhancing large language models, particularly when data access is restricted by regulatory or domain-specific constraints. This position paper argues that the research community should shift its focus from developing new merging or routing algorithms to understanding the conditions under which reusing LoRAs is truly effective. Through theoretical analysis and synthetic two-hop reasoning and math word problem tasks, we examine whether reusing LoRAs enables genuine compositional generalization or merely reflects shallow pattern matching. Evaluating two data-agnostic methods—parameter averaging and dynamic adapter selection—we found that reusing LoRA often fails to logically integrate knowledge across disjoint fine-tuning datasets, especially when such knowledge is underrepresented during pretraining. Our empirical results, supported by theoretical insights into LoRA's limited expressiveness, highlight the preconditions and constraints of reusing them for unseen tasks and cast doubt on its feasibility as a truly data-free approach. We advocate for pausing the pursuit of novel methods for recycling LoRAs and emphasize the need for rigorous mechanisms to guide future academic research in adapter-based model merging and practical system designs for practitioners.
\end{abstract}

\section{Introduction}
\label{sec:intro}
Many real-world applications require large language models to integrate scattered information and infer logical answers to novel questions. For instance, an AI assistant supporting human resource specialists in determining an employee's tax rate must combine information about the employee's marital status and the spouse's residency, as this affects the application of tax law. Such information often originates from scarce data sources or resides in separate systems where regulatory constraints limit direct access. Hence, compositional generalization, the ability of a model to create new combinations of known elements, is essential for the quality of such applications.

As pretraining and fine-tuning become standard practices in Large Language Model (LLM) development, reusing shared model weights from foundation models and their fine-tuned variants has emerged as a practical strategy for generalization in data-scarce scenarios. Unlike Federated Learning \citep{pmlr-v54-mcmahan17a}, this so-called model merging \citep{Raffel_model_merging} approach passively operates on shared model weights without coordinated training rounds. As parameter-efficient fine-tuning methods gain popularity, combining fine-tuned modules, especially low-rank adapters (LoRAs) \citep{Hu_et_al_lora}, has emerged as a data-free alternative to enhance model capabilities \citep{beck-etal-2022-adapterhub, huang2024lorahubefficientcrosstaskgeneralization, zhao2024mergingloraslikeplaying, ostapenko24_arrow, prabhakar2024lorasoupsmergingloras, zhao-etal-2024-loraretriever, yadav2025a}. Users can exchange and merge LoRA updates at inference time like plug-and-play libraries. Consequently, this idea has sparked a proliferation of novel methods for reusing fine-tuned LoRA weights for new tasks \citep[e.g.][]{huang2024lorahubefficientcrosstaskgeneralization, ostapenko24_arrow, zhao2024mergingloraslikeplaying, beck-etal-2022-adapterhub, zhang2025lori}.

These approaches are appreciated for their computational and economic efficiency. However, they are often developed and validated under varying experimental conditions, with differing assumptions about system architecture, data availability, usage scenarios, and computational budgets. While recent work has addressed such inconsistencies in combining entire fine-tuned foundation models \citep{tam2024realisticevaluationmodelmerging}, the various design choices for merging or routing LoRA modules have only been surveyed \citep{yadav2025a}, leaving many questions unanswered. Furthermore, Large Language Models (LLMs) gain knowledge through pretraining, while supervised fine-tuning of instruction-following tasks teaches them the style or format for user interaction \citep{NEURIPS2023_ac662d74}. Consequently, fine-tuning LLMs with new knowledge often leads to hallucinations \citep{gekhman2024doesfinetuningllmsnew, pmlr-v235-ghosal24a}. Low-Rank Adaptations (LoRAs) are inherently limited in their expressiveness \citep{zeng2024expressivepowerlowrankadaptation} and can reduce chain-of-thought (CoT) reasoning abilities \citep{lobo2025impactfinetuningchainofthoughtreasoning}. 

The effectiveness of combining LoRA modules to generalize to new tasks is a critical concern. 
This position paper presents theoretical analysis and empirical findings on synthetic reasoning tasks to demonstrate the limitations of merging or routing LoRAs for zero-shot generalization to unseen tasks. 
Our findings indicate that combining LoRAs is ineffective for new tasks unless those tasks are already represented in the fine-tuning datasets.
Low-level statistics, such as the familiarity of entities or Chain-of-Thought templates, serve as crucial bridges for integrating disjoint information among LoRAs to generate logical answers to novel queries. Understanding these mechanisms is crucial for designing systems that can effectively reuse LoRAs and create suitable fine-tuning datasets. Designers must consider the specific applications for LoRA reuse, as curated training data is vital for successful combination. \textbf{
Our position hence is: We advocate for a shift in focus from algorithmic innovation to a rigorous understanding of the boundaries of adapter-based merging or routing, leveraging synthetic data and theoretical analysis.}

In the following sections, we begin with a discussion of related work and some overlooked perspectives. We then present theoretical analysis and empirical results that reveal the limitations of combining LoRAs, using synthetic two-hop reasoning and math problem setups. After discussing some alternative views, we conclude with our position on the effectiveness of LoRA combination.

\section{Discussion on Related Work Perspectives}
\label{sec:related}

LoRA modules \citep{Hu_et_al_lora} have emerged as a privacy-friendly, data-free method for sharing model capabilities, allowing users to exchange LoRA updates and merge them at inference time like plug-and-play libraries \citep{beck-etal-2022-adapterhub, huang2024lorahubefficientcrosstaskgeneralization, zhao2024mergingloraslikeplaying, ostapenko24_arrow, prabhakar2024lorasoupsmergingloras, zhao-etal-2024-loraretriever, yadav2025a}. However, many recycling methods require examples from unseen tasks to estimate merging weights or routers, raising questions about how much successful generalization can be attributed to the LoRAs themselves. This highlights the need for mechanisms ensuring effective LoRA combination under limited data access.
Weight averaging is a popular method for recycling LoRAs, inspired by findings that fine-tuned models remain in the same loss basin as pretrained weights \citep{NEURIPS2020_Neyshabur}. Task vectors, extracted from the difference between pretrained and fine-tuned model weights, can steer model behavior through arithmetic operations \citep{ilharco2023editingmodelstaskarithmetic}. Recent algorithms focus on resolving merge conflicts \citep{yadav2023tiesmerging}, randomly pruning redundant parameters \citep{pmlr-v235-yu24p}, and estimating weights for averaging LoRAs \citep{huang2024lorahubefficientcrosstaskgeneralization, prabhakar2024lorasoupsmergingloras}, but the mechanisms enabling successful generalization remain unexplored.
Mixture of Experts (MoE) architecture leverages fine-tuned LoRA adapters for novel domains. However, many methods require the domain data to setup the MoE for retrieving experts or training routers \citep{chronopoulou2023adaptersoupweightaveragingimprove, zhao-etal-2024-loraretriever, Jang_ICML_23}. Arrow \citep{ostapenko24_arrow} is a notable exception, which routes LoRAs directly based on similarity between query tokens and singular values of LoRA experts. 

\textbf{What can be recycled from a hub of LoRAs?}
When privacy is crucial, options are limited, especially for zero-shot generalization without training data. The latent logic in pretraining corpus or term frequency may play roles in combining LoRAs for zero-shot generalization. Scaling language models has shown emergent abilities for zero-shot reasoning \citep{wei2022emergent, NEURIPS2022_8bb0d291}, suggesting latent logical knowledge acquisition. Task vectors demonstrate analogical reasoning through arithmetic operations, but their effectiveness may depend on term co-occurrence frequency in pretraining data \citep{merullo2025on}. If the low-rank adapters are linear approximation of the fine-tuned tasks, such term-frequency effect in pretraining may set the limit for the combination of LoRAs to generalize to tasks that are underrepresented in the pretraining dataset. An alternative is that observed generalization performance via merging or routing LoRAs reflects superficial pattern-matching rather than genuine compositionality. Empirical studies indicate LLMs rely on token-level cues, with small lexical changes affecting reasoning performance \citep{mirzadeh2024gsmsymbolicunderstandinglimitationsmathematical, li2024gsmpluscomprehensivebenchmarkevaluating}. LLMs struggle with latent multi-hop reasoning, relying on explicit prompting to bridge compositionality gaps \citep{press-etal-2023-measuring, balesni2025twohopcursellmstrained}. Synthetic reasoning tasks thus play a key role in assessing compositional generalization, which indicates how effectively LoRA combination transfers to entirely novel tasks.

We began with theoretical analysis in 2-hop reasoning scenarios, mimicking real-world cases where models answer questions about unseen entity relationships. Using synthetic data to avoid pretraining contamination, we tested whether combining LoRAs without further training enables zero-shot solutions for 2-hop reasoning and complex math problems. Finally, we repeated experiments on models pretrained with different methods (e.g., chain-of-thought distillation, math corpus) to identify conditions for successful LoRA reuse.

\section{Theoretical Analysis}

Here, we argue theoretically that low-rank adaptation, while it can  store new facts in transformers, is  unlikely to lead to compositional behavior when combining different LoRAs.
We study this by considering the problem of composing knowledge from two LoRAs, where each contains factual knowledge, and their combination is expected to perform two-hop reasoning \citep[e.g.][]{yang2024large, balesni2025twohopcursellmstrained} that requires both pieces of knowledge.
In general, direct theoretical understanding of multi-layer softmax transformers is very difficult; but many theoretical insights have been obtained by studying one-layer models and the limit of very wide models.
We use this approach to perform a simple analysis of low-rank adaptation for factual knowledge.
Our setup is inspired by a prior theoretical study of factual recall in transformers \citep{nichani2025understanding} focusing on one-layer transformers.

For simplicity, we focus on the special case of a single attention head, and do not assume noise tokens in the context.
\citet[][Section 4]{nichani2025understanding} show that either an MLP or an attention head can perform factual recall.
Adapting the MLP with LoRA on a one-hop prompt can change individual facts -- such as setting $r_1(x_1)$ to $x_2$. However, importantly, combining LoRAs adapting two relations will not result in compositional behavior, as we show below.

\subsection{Setup}
\paragraph{Entities, Facts, and Prompts}
We consider a set $\mathcal{X}$ of entities, and a set $\mathcal{R}$ of binary relations $r \subset \mathcal{X} \times \mathcal{X}$ (e.g., X is married to Y; Y lives in Z, etc).
We assume that each $r$ is a partial function, i.e., for each $x$, there is at most one $y$ satisfying $(x,y) \in r$; we write $y = r(x)$.
Whereas \citet{nichani2025understanding} relied on the assumption that each relation maps to a disjoint output space, we avoid this assumption.
We assume that the model operates on the following prompts

\begin{enumerate}
\item One-Hop: \texttt{X REL} (where \texttt{X} represents an entity $x \in \mathcal{X}$ and \texttt{REL} represents a relation $r \in \mathcal{R}$, with expected completion: \texttt{Y}, where $y = r(x)$ (e.g., ``the spouse of X is Y'').
\item Two-Hop: \texttt{X REL1 REL2} (where \texttt{REL1}, \texttt{REL2} represent relations $r_1, r_2 \in \mathcal{R}$), with expected completion: \texttt{Y}, where $y = r_2(r_1(x))$ (e.g. ``the place of birth of the spouse of X is Y'').

\end{enumerate}

\paragraph{Simple Transformer Model}
We consider a vocabulary consisting of relations $r_1, r_2, \dots$ and entities $x_1, x_2, \dots$; with token embeddings $e_{r_i}, e_{x_i} \in \mathbb{R}^d$. We will write $E \in \mathbb{R}^{\left|\mathcal{X} \cup \mathcal{R}\right| \times d}$ for the matrix holding all token embeddings. 
We assume a single softmax attention head with $K, Q, V \in \mathbb{R}^{d\times d}$ matrices, and a ReLU MLP with hidden dimension $m$ given by matrices $U \in \mathbb{R}^{m \times d}$; $W \in \mathbb{R}^{|\mathcal{X}| \times m}$, mapping a vector $x$ to $W \cdot ReLU(U x )$.
We do not require positional encodings.
We assume that the next-token prediction is provided by $W$ as a one-hot encoding of the target entity, $i_x$, omitting softmax for simplicity.
Our aim is to showcase limitations in composition, not in storage of knowledge itself; hence, we allow the model a width $d$ substantially larger than $|\mathcal{X}| |\mathcal{R}|$. In order to give the MLP as much capacity as needed, we allow $m$ to be arbitrarily large.

\citet{nichani2025understanding} took $E$ to be randomly initialized and not traineed. We follow this assumption, and  additionally take  $U, V$ to remain untrained, as we do not assume noise tokens in the context.
Overall, we assume that $U, V, E$ matrices  are randomly initialized, all with entries from $\mathcal{N}(0, \frac{1}{\sqrt{d}})$.
We focus consideration of training to $W$.
This represents a random features setup \citep[e.g.][]{rahimi2008weighted, ghosal2022randomly, dirksen2022separation}. 
In this setup, softmax attention is close to uniform; we will take it to be exactly uniform for simplicity.
We will examine the situation where the base model already performs correctly for the given set of relations $\mathcal{R}$, and $W$ is then adapted to reflect edits to such facts.

We focus LoRA on $W$, in agreement with our experimental finding in Section~\ref{sec:experiment-2hop} that applying to MLPs can be sufficient to get most of the gains.
We consider updates $\Delta W = AB^T$ with $A \in \mathbb{R}^{|\mathcal{X}| \times s}, B \in \mathbb{R}^{m \times s}$ where $s$ is small,  subject to an L2 penalty $\|A\|_{F} + \|B\|_F$.
We particularly consider one-rank updates, $\Delta W = pq^T$ where $p \in \mathbb{R}^{|\mathcal{X}|}, q \in \mathbb{R}^m$.

We note that this setup simplifies many aspects of transformers: there is only one layer and one head, and training focuses on the (linearized) output. We also remove the softmax over the vocabulary in the output.
Our setup is designed to be \emph{simplest possible} setup in which a nontrivial statement about LoRA's ability to learn and combine abilities can be made.

\subsection{Results}
The correct responses to all 1-hop and 2-hop relations can jointly be coded into $W$ when $d$ and $m$ are sufficiently large, due to the separation ability of the random features model \citep{ghosal2022randomly}. 
This analysis is in line with mechanistic studies of factual recall suggesting MLPs act as key-value storage \citep{geva2021transformer}.

Changing a fact $y = r(x)$ requires changing the output of the MLP on the subspace spanned by the entity and relation. 
When the update affects only a single fact, L2 regularization ensures that it has a simple and interpretable closed form:

\begin{proposition}\label{prop:update}
A rank-one update to $W$ changing the output on a prompt \texttt{X REL} from $r(x)$ to $\tilde{r}(x)$ must have the form: 
\begin{equation}
\Delta W_{r \mapsto \tilde{r}} = \frac{1}{\|ReLU(UV e_{\texttt{X}} + U e_{\texttt{REL}} )\|_2^2}  (i_{\tilde{r}(x)} - i_{r(x)}) ReLU(U\cdot V e_{\texttt{X}} + U\cdot e_{\texttt{REL}} )^T
\end{equation}
\end{proposition}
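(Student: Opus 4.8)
The plan is to reduce the statement to a two-variable constrained optimization, exploiting that the entire forward pass collapses to the action of the rank-one update on a single hidden vector. First I would carry out the forward pass on the one-hop prompt \texttt{X REL} to identify that vector. With the softmax attention taken to be exactly uniform, the attention block writes the entity's value $V e_{\texttt{X}}$ into the residual stream at the final position, which already carries the relation embedding $e_{\texttt{REL}}$; feeding $V e_{\texttt{X}} + e_{\texttt{REL}}$ through $U$ and the nonlinearity gives the hidden activation $\phi := \mathrm{ReLU}(U V e_{\texttt{X}} + U e_{\texttt{REL}})$, which is exactly the vector appearing in the proposition's denominator. By the standing assumption that the base model is already correct, $W\phi = i_{r(x)}$, so the whole problem is now phrased in terms of $\phi$ and the target displacement $\delta := i_{\tilde{r}(x)} - i_{r(x)}$.

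Next I would extract the constraint. For a rank-one update $\Delta W = p q^T$, the adapted output on the prompt is $(W + p q^T)\phi = i_{r(x)} + p\,(q^T\phi)$, so demanding that it equal $i_{\tilde{r}(x)}$ is equivalent to $p\,(q^T\phi) = \delta$. Since $q^T\phi$ is a scalar, this already forces $p$ to be parallel to $\delta$; the only remaining degrees of freedom are the direction of $q$ and an overall scaling split between the two factors.

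The final step is to minimize the penalty $\|p\|_2 + \|q\|_2$ against this constraint. Any component of $q$ orthogonal to $\phi$ leaves $q^T\phi$ unchanged while strictly increasing $\|q\|_2$, so at the optimum $q$ is parallel to $\phi$; writing $p = \lambda\delta$ and $q = \mu\phi$, the constraint reads $\lambda\mu\,\|\phi\|_2^2 = 1$. The AM--GM inequality fixes the optimal magnitudes, but the decisive observation is that the product actually defining the update is scale-invariant,
\[ \Delta W = p q^T = \lambda\mu\,\delta\,\phi^T = \frac{1}{\|\phi\|_2^2}\,(i_{\tilde{r}(x)} - i_{r(x)})\,\mathrm{ReLU}(U V e_{\texttt{X}} + U e_{\texttt{REL}})^T, \]
so the scaling ambiguity in $(\lambda,\mu)$ cancels and $\Delta W$ is pinned down uniquely, matching the claimed form.

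I expect the main obstacle to be conceptual rather than computational. The reduction to $\phi$ hinges on pinning down the attention-and-residual convention precisely enough to land on $V e_{\texttt{X}} + e_{\texttt{REL}}$ rather than, say, a uniform average that also drags in $V e_{\texttt{REL}}$; and the word \emph{must} in the statement rests on keeping two sources of rigidity separate -- the direction of $p$ is forced by the hard edit constraint, whereas the direction of $q$ is forced only by the penalty. Making both explicit, and verifying that the leftover scaling genuinely drops out of the product $p q^T$, is what upgrades the mere existence of a valid update into the stated canonical closed form.
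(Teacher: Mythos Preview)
Your proposal is correct and follows essentially the same route as the paper's proof: reduce the edit condition to $p\,(q^T\phi)=\delta$ on the single hidden activation $\phi$, argue that $p$ is parallel to $\delta$ and $q$ parallel to $\phi$, and read off the scalar $1/\|\phi\|_2^2$ from the constraint. The only stylistic difference is that the paper eliminates $p$ via $p = w/(q^T v)$ and then sets the gradient of the reduced objective to zero to conclude $q$ is a multiple of $v$, whereas you use the cleaner orthogonal-component argument (dropping any part of $q$ orthogonal to $\phi$ strictly decreases the penalty without touching the constraint); both lead to the same place, and your observation that the residual scaling ambiguity cancels in the product $pq^T$ is exactly the paper's closing step.
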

This is similar to the  RoME update \citep{meng2022locating}.
Intuitively, based on the idea that MLPs act as key-value storage, the LoRA update $\Delta W = AB^T$ specifically addresses the encoding of the prompt \texttt{X REL} in the $B$ matrix, and the changed output in the $A$ matrix.
The proof is in Appendix~\ref{sec:theory-appendix}.

Now consider a two-hop prompt \texttt{X REL1 REL2}, intended to denote the composition of the two relations.
Given sufficient width, any set of such two-hop facts can be encoded in $W$.
However, as we next show, adding two LoRAs modifying two relations ($\Delta W_{r\mapsto \tilde{r}}, \Delta W_{r\mapsto \hat{r}}$) will not unlock compositional behavior on the new facts:
\begin{thm}\label{thm:two-hop-difficult}
    Assume LoRAs $\Delta W_{r_1\mapsto \tilde{r}_1}, \Delta W_{r_2\mapsto \hat{r}_2}$ are created to adapt two single facts for $r_1$, $r_2$.
    Summing these adapters will not result in correct results for composition of the two relations $r_1, r_2$. 
\end{thm}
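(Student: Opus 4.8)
The plan is to write the summed model's output on the two-hop prompt explicitly, using Proposition~\ref{prop:update} for the two rank-one updates, and then to show via the random-features concentration that neither update implements the data-dependent routing that composition requires. First I would fix the MLP input at the final position of \texttt{X REL1 REL2}. Since attention is uniform and positional information is absent, this residual stream is a fixed linear combination $h_{12} = \alpha\, V e_{\texttt{X}} + \beta\, V e_{\texttt{REL1}} + e_{\texttt{REL2}}$, with $\alpha,\beta$ determined only by the averaging weights. The decisive structural fact is that $h_{12}$ contains the token embeddings $e_{\texttt{X}}, e_{\texttt{REL1}}, e_{\texttt{REL2}}$ but never the embedding of the intended intermediate entity $y := \tilde{r}_1(x)$ (for which correct composition would return $z := \hat{r}_2(y)$): a one-layer uniform-attention model cannot materialize the first-hop answer in its residual stream. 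Writing $k_1 = \mathrm{ReLU}(U V e_{\texttt{X}} + U e_{\texttt{REL1}})$ and $k_2 = \mathrm{ReLU}(U V e_{\texttt{Y}} + U e_{\texttt{REL2}})$ for the two LoRA keys from Proposition~\ref{prop:update}, the output on the two-hop prompt is
\[
W_0\,\mathrm{ReLU}(U h_{12}) \;+\; c_1\,(i_{\tilde{r}_1(x)} - i_{r_1(x)}) \;+\; c_2\,(i_{\hat{r}_2(y)} - i_{r_2(y)}),
\]
where $c_j := k_j^T \mathrm{ReLU}(U h_{12})/\|k_j\|_2^2$.

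Next I would estimate the firing coefficients $c_1,c_2$ in the random-features regime. Since $U,V,E$ have i.i.d.\ Gaussian entries, inner products of ReLU activations concentrate (as $m\to\infty$) around the arc-cosine kernel evaluated at the correlation of the pre-activations, while cross terms between embeddings of distinct tokens are $o(1)$ by high-dimensional near-orthogonality. This gives $h_1^T h_{12} \approx \alpha\,\|V e_{\texttt{X}}\|_2^2$ — a $\Theta(1)$ overlap carried entirely by the shared entity token $x$ — so $c_1 = \Theta(1)$; and $h_2^T h_{12} \approx \|e_{\texttt{REL2}}\|_2^2$, a $\Theta(1)$ overlap carried entirely by the shared relation token \texttt{REL2}, so also $c_2 = \Theta(1)$. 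The crucial observation is that $h_2^T h_{12}$, and hence $c_2$, depends only on the shared \texttt{REL2} component and is independent (up to $o(1)$) of which entity $y$ the second fact concerns.

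The conclusion then follows from two failure modes. (i) The first update fires with coefficient $c_1 = \Theta(1)$ and pushes the two-hop logits toward $i_{\tilde{r}_1(x)} = i_y$, i.e.\ toward the intermediate bridge entity $y$, which is not the composed target $z = \hat{r}_2(y)$; the first-hop edit therefore actively corrupts the two-hop output instead of being consumed by the second hop. (ii) The coefficient $c_2$ multiplying the only composition-consistent direction $i_{\hat{r}_2(y)}$ is carried by the surface token \texttt{REL2} and does not depend on whether the second fact's input entity equals the first-hop image $\tilde{r}_1(x)$; hence the second update fires on \emph{every} two-hop prompt ending in \texttt{REL2} with the same $\Theta(1)$ strength and injects its stored direction regardless of the first hop. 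This is relation-token pattern matching, not the look-up $r_2(\tilde{r}_1(x))$ that composition requires — a LoRA-2 encoding a fact about an entity unrelated to $x$ would fire identically and push toward an answer unrelated to $x$. Since the composed direction $i_{\hat{r}_2(y)}$ competes against the base two-hop term, the spurious push toward the bridge entity $i_y$, and the subtraction $-c_2 i_{r_2(y)}$, there is no reason these align to make $z$ the argmax, so the sum cannot produce $i_{\hat{r}_2(\tilde{r}_1(x))}$ except by coincidence of the base term.

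I expect the main obstacle to be the concentration step: making the arc-cosine-kernel approximations for $k_j^T \mathrm{ReLU}(U h_{12})$ and $\|k_j\|_2^2$ quantitative with explicit $m$- and $d$-dependent error bars, correctly accounting for the positive mean that ReLU introduces into every such inner product, and controlling the interaction with the untrained base term $W_0\,\mathrm{ReLU}(U h_{12})$ so that the insensitivity-to-routing statement survives after the base output is added back. A secondary subtlety is fixing the averaging weights $\alpha,\beta$ consistently with the one-hop normalization used in Proposition~\ref{prop:update}, so that the two keys are compared on the same footing.
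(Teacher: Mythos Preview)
Your proposal is correct and follows essentially the same route as the paper: both invoke Proposition~\ref{prop:update} to write the two rank-one updates, evaluate the summed model on the two-hop input $\xi = \tfrac{1}{2}Ve_{\texttt{X}} + \tfrac{1}{2}Ve_{\texttt{REL1}} + e_{\texttt{REL2}}$, pass to the arc-cosine kernel in the $m\to\infty$ limit to see that both firing coefficients $k(\eta_j,\xi)/k(\eta_j,\eta_j)$ are nonzero, and conclude that the output is a linear mixture of $i_y - i_{r_1(x)}$ and $i_z - i_{r_2(y)}$ rather than the compositional target. Your observation that $c_2$ depends only on the shared \texttt{REL2} token and not on the bridge entity is exactly the point the paper makes at the end of its proof by considering a different input entity \texttt{U}; the paper likewise leaves the concentration step at the ``$\approx$'' level and does not supply the quantitative error bars you flag as the main obstacle.
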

The formal proof is in Appendix~\ref{sec:theory-appendix}.
The reasoning is as follows. As shown in Proposition~\ref{prop:update}, the two LoRAs specifically modify the MLP output on the subspaces inhabited by the activations computed on the two one-hop prompts.
When the model encounters a two-hop prompt, the activations will partly overlap with the subspaces for both one-hop prompts, and the adapters will lead the model to output not the composition $r_2(r_1(x))$, but a linear mixture of two relevant entities.
A natural question is whether some of the routing or weighting methods proposed in the literature resolve this; it turns out that the argument extends to those: For instance, weighted averaging of the two adapters \citep[e.g.][]{prabhakar2024lorasoupsmergingloras, ostapenko24_arrow} will still fail to perform compositionally when several facts are updated (see Appendix~\ref{sec:theory-discussion} for more).
Yet another approach might be to combine a larger library of LoRAs where some have been trained on 2-hop examples from other task pairs. One might hope that this would prime the model towards compositional behavior; however, the reasoning above still applies, and suggests that reusing LoRAs would still fail to behave compositionally (Appendix~\ref{sec:theory-discussion}). 

One limitation of our theoretical analysis is that (in line with \citet{nichani2025understanding}) it applies to a single-layer transformer; our experiments test applicability of the conclusions to LLMs across scales.

\section{Experiments}
\label{sec:experiments}

Our experiments aim to test under what circumstances combining LoRAs can enable LLMs to perform new tasks that require logical combinations of different LoRA's expertise. Our primary focus is on the two data-agnostic routing methods, \emph{Uniform} averaging and \emph{Arrow} routing \citep{ostapenko24_arrow}, which directly work on shared LoRA experts' weights (see Appendix~\ref{app:twohop-result} for details). We synthesized two reasoning tasks, 2-hop reasoning and easy-to-hard math word problems, to examine the successful factors underlying their zero-shot generalization via reusing existing LoRAs on novel tasks. Specifically, we investigate the preconditions necessary for LoRA routing to be effective, such as entity familiarity, domain-specific pretraining of base models, and the necessity of the presence of novel tasks in fine-tuning LoRA experts. We assess how these effects on different routing strategies would scale with base model sizes, ranging from 3 billion to 70 billion parameters. Our findings emphasize the importance of domain-specific pretraining, common templates for LoRA fine-tuning datasets, and potential interference from routing compared to individual LoRA experts.

\subsection{Two-Hop generalization}\label{sec:experiment-2hop}

We investigate whether combining two LoRAs enables compositional reasoning. Building on our theoretical analysis and inspired by \cite{balesni2025twohopcursellmstrained}, we design a two-hop reasoning task requiring composition across linguistic variations while controlling for base model knowledge. The dataset uses a fixed structure: \textbf{First Hop} ($A \to B$, identifying the spouse of a given entity $A$), \textbf{Second Hop} ($B \to C$, identifying the residence of $B$), with the goal of inferring $A \to C$. This setup closely follows our Theorem~\ref{thm:two-hop-difficult}, which suggests that LoRAs trained on one of the two hops each would, if combined, not unlock the indirect relationship.

We conduct three datasets varying the nature of entity names and locations while ensuring that the relational facts remain synthetic: $F$ (fake names, fake locations), where both entities and locations are synthetic (e.g., $(Zint, Frosk, Narik)$); $H$ (fake names, real locations), where names are synthetic but locations are real (e.g., $(Zint, Frosk, London)$); and $R$ (real names, real locations), where both names and locations are real, but relationships are deliberately shuffled to remain false (e.g., $(Barack\ Obama, Camila\ Alves, London)$). We refer to the first-hop ($A\to B$), second-hop ($B\to C$), and the two-hop ($A\to C$) subsets of each dataset as $F_1, F_2, F_{12}$, $H_1, H_2, H_{12}$, and $R_1, R_2, R_{12}$, respectively (see Table~\ref{tab:notation-examples} and Appendix~\ref{sec:2hop-expt-appendix} for examples and details).

Based on ablation studies (Table~\ref{tab:two-hop-finetuned-layers} and ~\ref{tab:bridge_dataset_performance} in Appendix ~\ref{app:twohop-result}), we focused on fine-tuning only the MLP layers of the following base models: Qwen2.5-3B-Instruct, Qwen2.5-7B-Instruct, and Qwen2.5-14B-Instruct \cite{qwen2025qwen25technicalreport}, as well as DeepSeek-R1-Distill-Qwen-7B, DeepSeek-R1-Distill-Qwen-14B, and DeepSeek-R1-Distill-LLaMA-70B \cite{deepseekai2025deepseekr1incentivizingreasoningcapability}.

\subsubsection{Impact of base model and familiarity}
\paragraph{Experiment setup}
For each dataset ($F$, $H$, $R$), we train four LoRA adapters (experts) LoRA 1, LoRA 2, LoRA 3 (Oracle Expert), and LoRA 4 (Mixed Two-Hop Expert) on relation $A \rightarrow B$, $B \rightarrow C$, $A \rightarrow C$, and mixed data ($A \rightarrow B$, $B \rightarrow C$), respectively. From these experts, we construct two libraries: \textbf{2-combination library} which includes LoRA 1 and LoRA 2; and \textbf{3-combination library} which includes LoRA 1, LoRA 2, and LoRA 3. We evaluate the model’s ability to generalize and infer $A \rightarrow C$ relationships. We use Chain-of-Thought (CoT) prompting during testing for the 3-combination library and the 2-combination library.
\paragraph{Results and analysis}
As shown in Figure~\ref{fig:lora-result}a, performance on the 3-combination library for the $H$ dataset improves with model size, while the 2-combination library remains consistently poor. Figure~\ref{fig:lora-result}b shows that $R$ outperforms $H$, both far exceeding $F$. Notably, with only $A \rightarrow B$ and $B \rightarrow C$ adapters (2-combination library), accuracy stays below 10\%, supporting Theorem~\ref{thm:two-hop-difficult} that composing knowledge across separate LoRAs is inherently challenging. Even when $A \rightarrow C$ is covered (3-combination library), routing does not always succeed, in particular in smaller models and the presence of  of unfamiliar entities, such as fake names or cities in the $F$ dataset. These trends hold across datasets and model families (see Table~\ref{tab:lora_results} in Appendix~\ref{app:twohop-result}).
\begin{figure}[ht]
    \centering
    \includegraphics[width=1\linewidth]{./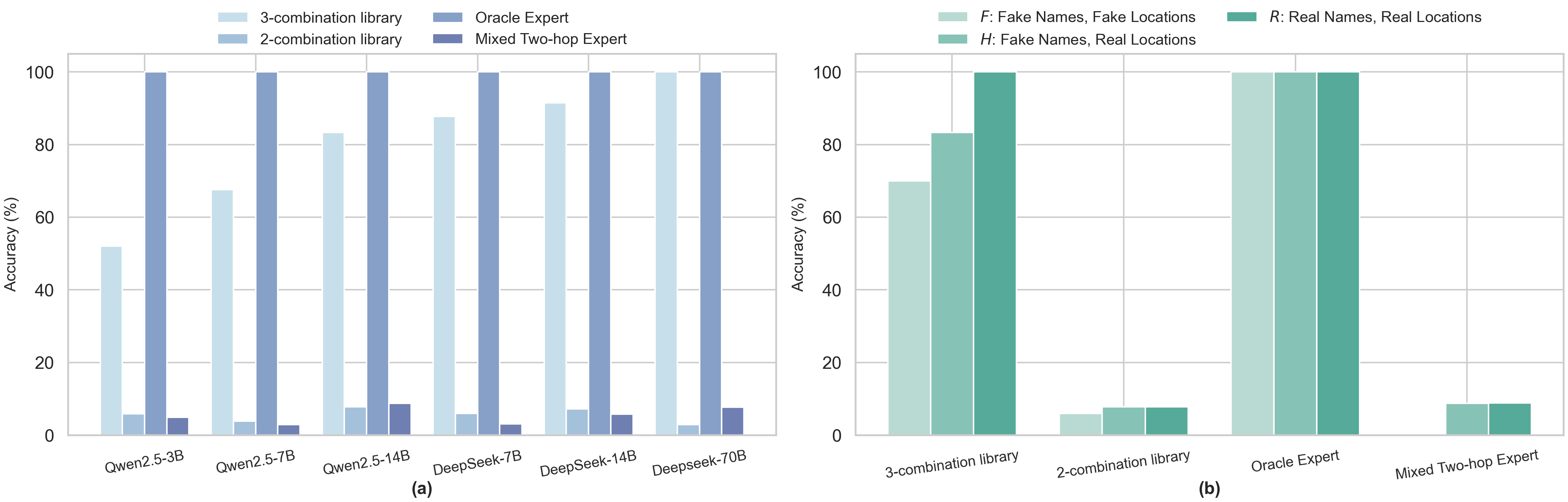}
   \caption{Performance of LoRA libraries and individual experts on two-hop datasets. (a) Comparison of library-level and expert-level performance on the test set of $H$ across different base models. (b) Impact of entity familiarity on the performance of LoRA combination methods, using Qwen2.5-14B-Instruct as the base model, evaluated on three test sets: $F$, $H$, and $R$. Across setups and model sizes, performance on the 2-combination library is poor; including an expert trained on the target task ($A \rightarrow C$ relationship)  is necessary.}
    \label{fig:lora-result}
\end{figure}

\subsubsection{Composition requires close match between testing and training prompts} 
So far, we found that synthesizing two-hop reasoning from two LoRAs is difficult, in agreement with our theoretical predictions.
What strategies could enable composition? 
While a substantial amount of work has found chain-of-thoughts to support reasoning (including in the 2-hop setup when employing full-parameter finetuning, \citep{balesni2025twohopcursellmstrained}), we found that composition from the two $A\rightarrow B$ and $B\rightarrow C$ experts performs poorly even with chain-of-thought prompting.
Our theoretical analysis suggests that, as LoRA adapters rely on targeting specific low-dimensional subspaces, compositional behavior can only be unlocked when the target prompts show a close formal match to prompts on which the LoRAs were trained.
CoT prompting might thus be insufficient as the form of the targets mismatches the one-hop training examples of the LoRAs.
However, mixing in CoT examples into the LoRA training data might be sufficient.
In this section, we test if it is possible to enable composition by including CoT templates in the training data, and how close the match in reasoning patterns needs to be between finetuning and testing datasets. We specifically define the following \emph{bridge} technique, and designed a series of experiments to determine how closely to target task needs to be present in the LoRA finetuning datasets to enable compositional behavior.
The idea is that the finetuning dataset additionally includes examples of the targeted reasoning pattern.
We test, via ablations, which aspects of the targeted reasoning pattern needs to be present. 
\paragraph{Experimental setup}
We design two \emph{bridge} variants over the $F$ and $R$ datasets. The \textbf{Fake Bridge} ($B_F$) is constructed by concatenating the $F_1$ ($A \to B$), $F_2$ ($B \to C$), and $F_{12}$ CoT (CoT-formatted $A \to C$) subsets. The \textbf{Real Bridge} ($B_R$) follows the same structure but uses the corresponding subsets ($R_1$, $R_2$, $R_{12}$ CoT) from the $R$ dataset, which contains real names and locations (see Table~\ref{tab:bridge-dataset} and Figure~\ref{fig:2-hop} in the Appendix~\ref{sec:2hop-expt-appendix} for examples and details). We fine-tune adapters on the union of direct-hop examples and a bridge dataset. 
In the first configuration, \textbf{\textit{Setup 1}}, models are trained by mixing fake data subsets ($F_i$) with the Real Bridge set ($B_R$): LoRA 1 is trained on $F_1 + B_R$, and LoRA 2 on $F_2 + B_R$, with evaluation on the held-out subset $F_{12}$. In the second configuration, \textbf{\textit{Setup 2}}, models are trained by mixing real data subsets ($R_i$) with the Fake Bridge set ($B_F$): LoRA 1 is trained on $R_1 + B_F$, and LoRA 2 on $R_2 + B_F$, with evaluation on the held-out subset $R_{12}$. 
\begin{figure}[ht]
    \centering
    \includegraphics[width=0.92\linewidth]{./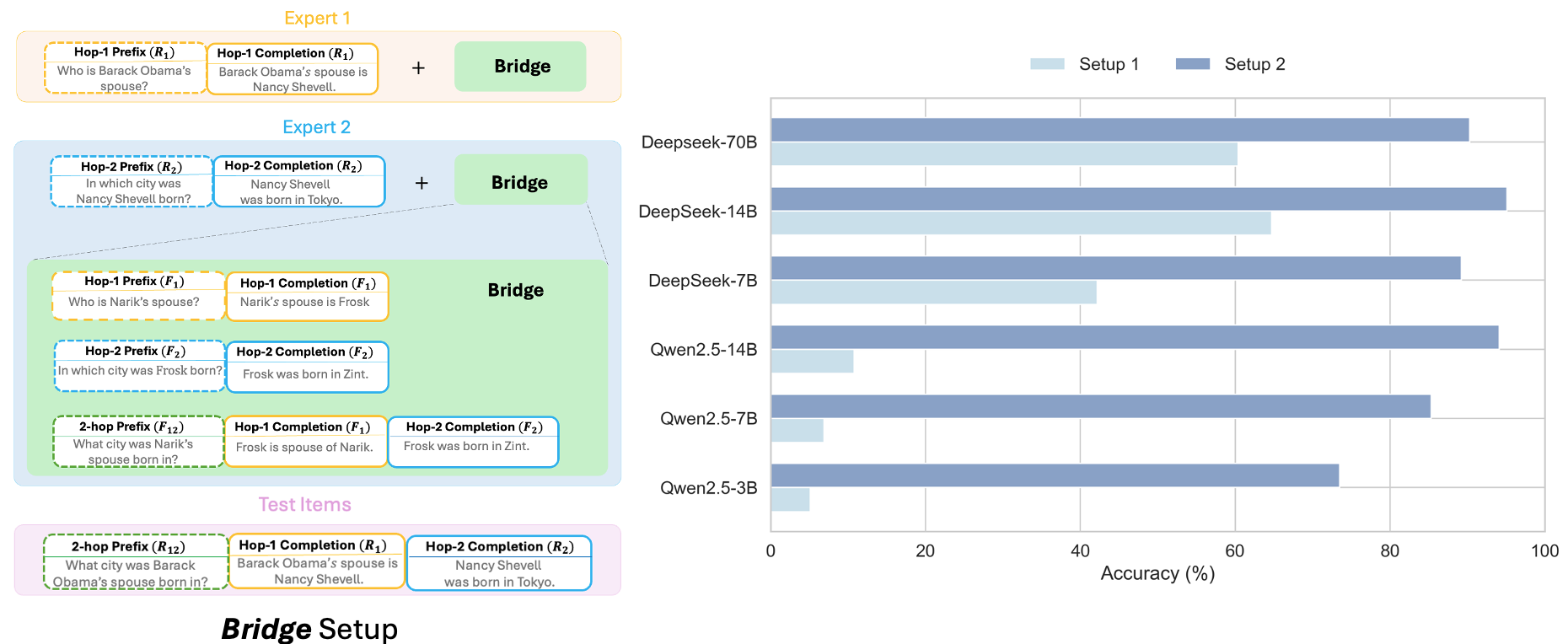}
    \caption{
    (left) In the bridge setup, both LoRA experts are trained not only on one of the two hops, but also on examples (disjoint from those needed in testing) of both hops, and chain-of-thought two-hop reasoning. 
    (right) Performance comparison of two setups across different base models: Setup 1 (\textbf{Real Bridge}, $B_R$) adds a bridge using real names and locations  to a dataset using fake names and fake locations ($F$), while Setup 2 (\textbf{Fake Bridge}, $B_F$) reverses these. Each setup uses LoRA 1 and LoRA 2 as the experts in the library. The bridge setup is much more successful in Setup 2.}
    \label{fig:bride-result}
\end{figure}

\paragraph{Results and analysis}
Figure~\ref{fig:bride-result} demonstrates that explicitly incorporating the target two-hop reasoning pattern into the LoRA fine-tuning data is crucial for achieving reliable compositional generalization. In \textbf{\textit{Setups 1}} and \textbf{\textit{2}}, where each adapter is trained on a synthetic direct-hop subset combined with the bridge dataset, \emph{Arrow} performance improves significantly compared to earlier experiments. Additionally, the bridge setup is much more successful in \textbf{\textit{Setup 2}}, highlighting the importance of entity familiarity for effective generalization.

We conducted a set of ablations to analyze which aspects are important for the success of this strategy.
Incorporating structured reasoning into LoRA finetuning yields only marginal gains unless the finetuning data closely mirror the target two-hop task.
First, as shown in Table~\ref{tab:bridge_ablation}, the bridge 
improves two-hop accuracy only when CoT-formatted $A \rightarrow C$ instances are included during adapter training. Omitting CoT formatting results in worse performance (\textbf{\textit{Setups 2}} vs. \textbf{\textit{3}}).
Second, simply including the bridge in only one of the two LoRA adapters (\textbf{\textit{Setups 4, 5}}), or providing only $A \rightarrow C$ prompts without the individual one-hop tasks (\textbf{\textit{Setups 6, 7}}), results in significantly weaker compositional performance compared to setups where both $F_1$ ($A \rightarrow B$) and $F_2$ ($B \rightarrow C$) are included alongside CoT bridging. This highlights the importance of exposing the model to each subtask.
Third, we found that  relaxing the bridge to use disjoint task pairs still produces nontrivial gains (\textbf{\textit{Setup 8}}, which uses completely different relations as the bridge dataset: $F_4$: \textit{study\_in} and $F_5$: \textit{child\_of}, see Table~\ref{tab:notation-examples} in the Appendix~\ref{sec:2hop-expt-appendix} for examples), suggesting that exact task-pair matching is less critical so long as the finetuning set contains examples reflecting the overall reasoning pattern. Altogether, these results confirm the importance of CoT exemplars and the individual tasks to unlocking generalization of Arrow routing, even if the examplars are semantically different from the target task.

Aside from combinations of two or three LoRAs, we further tested what happens when increasing the number of tasks present in the collection of LoRAs, including both various one-hop tasks, and also various two-hop tasks. Even in this case, we found that composition was very difficult, again in agreement with our theoretical predictions (Analysis in Appendix ~\ref{app:larger_set}). 

Overall, these results support our conclusion that (i) direct composition of knowledge from different LoRAs is very difficult, (ii) the LoRA finetuning datasets must contain examples closely matching the target reasoning behavior.

\begin{table}[htbp]
\caption{Ablations for the bridge training setups. We compare the 2-combination libraries trained on just the two hops (0), the full bridge (2), with various strategies interpolating between these, such as providing a bridge in only one expert (4 and 5), or omitting the CoT template  from the bridge (3). The full bridge attains highest performance, and versions not including a bridge CoT in both of the two experts show poor performance (0, 3, 4, 5).}
\label{tab:bridge_ablation}
\centering
\begin{adjustbox}{max width=\textwidth}
\begin{tabular}{cllcccccc}
\hline
\textbf{Setup} & \textbf{LoRA 1} & \textbf{LoRA 2} & \textbf{\begin{tabular}[c]{@{}c@{}}Qwen2.5-\\ 3B-Instruct\end{tabular}} & \textbf{\begin{tabular}[c]{@{}c@{}}Qwen2.5-\\ 7B-Instruct\end{tabular}} & \textbf{\begin{tabular}[c]{@{}c@{}}Qwen2.5-\\ 14B-Instruct\end{tabular}} & \textbf{\begin{tabular}[c]{@{}c@{}}DeepSeek-R1-\\ Distill-Qwen-7B\end{tabular}} & \textbf{\begin{tabular}[c]{@{}c@{}}DeepSeek-R1-\\ Distill-Qwen-14B\end{tabular}} & \textbf{\begin{tabular}[c]{@{}c@{}}DeepSeek-R1-\\ Distill-Llama-70B\end{tabular}} \\ \hline
\multirow{2}{*}{0} & \multirow{2}{*}{\textbf{$R_1$}} & \multirow{2}{*}{\textbf{$R_2$}} & \multirow{2}{*}{1\%} & \multirow{2}{*}{3.9\%} & \multirow{2}{*}{7.8\%} & \multirow{2}{*}{8.8\%} & \multirow{2}{*}{3.9\%} & \multirow{2}{*}{5.9\%} \\
 &  &  &  &  &  &  &  &  \\ \hline
\multirow{2}{*}{2} & \textbf{$R_1$} & \textbf{$R_2$} & \multirow{2}{*}{\textbf{73.5\%}} & \multirow{2}{*}{\textbf{85.3\%}} & \multirow{2}{*}{\textbf{94.1\%}} & \multirow{2}{*}{\textbf{89.2\%}} & \multirow{2}{*}{\textbf{95.1\%}} & \multirow{2}{*}{\textbf{90.3\%}} \\
 & $F_1 + F_2 + F_{12}$ CoT ($B_F$) & $F_1 + F_2 + F_{12}$ CoT ($B_F$) &  &  &  &  &  &  \\ \hline
\multirow{2}{*}{3} & \textbf{$R_1$} & \textbf{$R_2$} & \multirow{2}{*}{1\%} & \multirow{2}{*}{2.9\%} & \multirow{2}{*}{3.9\%} & \multirow{2}{*}{3.9\%} & \multirow{2}{*}{11.8\%} & \multirow{2}{*}{3.9\%} \\
 & $F_1 + F_2 + F_{12}$ & $F_1 + F_2 + F_{12}$ &  &  &  &  &  &  \\ \hline
\multirow{2}{*}{4} & \textbf{$R_1$} & \multirow{2}{*}{\textbf{$R_2$}} & \multirow{2}{*}{9.8\%} & \multirow{2}{*}{8.8\%} & \multirow{2}{*}{11.8\%} & \multirow{2}{*}{8.8\%} & \multirow{2}{*}{12.7\%} & \multirow{2}{*}{10.2\%} \\
 & $F_1 + F_2 + F_{12}$ CoT ($B_F$) &  &  &  &  &  &  &  \\ \hline
\multirow{2}{*}{5} & \multirow{2}{*}{\textbf{$R_1$}} & \textbf{$R_2$} & \multirow{2}{*}{10.8\%} & \multirow{2}{*}{35.3\%} & \multirow{2}{*}{27.5\%} & \multirow{2}{*}{25.9\%} & \multirow{2}{*}{22.5\%} & \multirow{2}{*}{26.1\%} \\
 &  & $F_1 + F_2 + F_{12}$ CoT ($B_F$) &  &  &  &  &  &  \\ \hline
\multirow{2}{*}{6} & \textbf{$R_1$} & \textbf{$R_2$} & \multirow{2}{*}{40.2\%} & \multirow{2}{*}{61.8\%} & \multirow{2}{*}{65.7\%} & \multirow{2}{*}{75.5\%} & \multirow{2}{*}{72.5\%} & \multirow{2}{*}{87.3\%} \\
 & $F_{12}$ CoT & $F_{12}$ CoT &  &  &  &  &  &  \\ \hline
\multirow{2}{*}{7} & \textbf{$R_1$} & \textbf{$R_2$} & \multirow{2}{*}{27.5\%} & \multirow{2}{*}{75.5\%} & \multirow{2}{*}{79.4\%} & \multirow{2}{*}{74.5\%} & \multirow{2}{*}{76.9\%} & \multirow{2}{*}{89.9\%} \\
 & $F_1 + F_{12}$ CoT & $F_2 + F_{12}$ CoT &  &  &  &  &  &  \\ \hline
\multirow{2}{*}{8} & \textbf{$R_1$} & \textbf{$R_2$} & \multirow{2}{*}{43.1\%} & \multirow{2}{*}{82.4\%} & \multirow{2}{*}{92.2\%} & \multirow{2}{*}{81.6\%} & \multirow{2}{*}{83.5\%} & \multirow{2}{*}{87.9\%} \\
 & $F_1 + F_4 + F_{14}$ CoT & $F_5 + F_2 + F_{52}$ CoT &  &  &  &  &  &  \\ \hline
\end{tabular}
\end{adjustbox}
\end{table}

\subsection{Generalization from Easy to Hard Math Word Problems}

To evaluate whether our findings hold in more realistic settings, we use the GSM-Symbolic benchmark \citep{mirzadeh2024gsmsymbolicunderstandinglimitationsmathematical}, which enables controlled assessment of reasoning robustness in math across well-defined difficulty levels. Each LoRA expert was fine-tuned on GSM-Symbolic (original) and GSM-P1 (with one added clause) individually, before being combined for evaluation on GSM-P2 (which adds another clause). We compare general-purpose and math-specialized models to assess the impact of pretraining. Similar to exposing LoRAs to solutions closely resembling the target task, we also tested whether fine-tuning with reusable Markdown and Python code \citep{suzgun2025dynamiccheatsheettesttimelearning} would improve generalization on GSM-P2. Detailed experimental design, fine-tuning, and evaluation procedures can be found in Appendix Section ~\ref{sec:gsm-expt}.

\paragraph{Limitations of LoRA Routing for Compositional Generalization.}
The effectiveness of LoRA routing is highly dependent on the base model’s pretraining history. To start, we replicated the findings of \citet{mirzadeh2024gsmsymbolicunderstandinglimitationsmathematical}, which show that large language models (LLMs) lack robustness in mathematical reasoning (see Appendix Section~\ref{sec:fragility-reasoning-appendix}, Table~\ref{tab:gsm8k-gsm-symbolic-benchmarks}). Routing methods such as Uniform and Arrow provided modest improvements for the general-purpose Qwen2.5-1.5B-Instruction model, but often degraded performance in math-specialized models like Qwen2.5-Math-Instruction, regardless of model size (Table~\ref{tab:gsm_lora_merge_performance}). Among these, Uniform consistently outperformed Arrow. Echoing prior work showing that 8-shot GSM8K in-context examples do not improve performance on GSM-P2 \citep{mirzadeh2024gsmsymbolicunderstandinglimitationsmathematical}, we further observed that combining these examples with LoRA routing actually worsened results. For example, in the Qwen2.5-Math-7B-Instruction model, Arrow routing with in-context examples reduced GSM-P2 accuracy from 0.27 to 0.06 (see Appendix Section~\ref{sec:in-context-learning}, Table~\ref{tab:gsm_lora_merge_performance_appendix} for details).

The performance drop observed after LoRA routing may stem from a mismatch between the fine-tuning data and the base model’s capabilities. Qwen2.5-Math-Instruction is designed to solve problems using Markdown and Python code, while the GSM-Symbolic benchmarks provide only natural language Chain-of-Thought (CoT) solutions. As a result, routing LoRAs fine-tuned on this dataset may suppress the model's tool-integrated reasoning abilities and lead to an increase in calculation errors. Our error analysis follows the definitions and procedures outlined by \citet{zhong2025achieving97gsm8kdeeply}. See Appendix Section~\ref{sec:error-analysis} and Table~\ref{tab:gpt4o_as_judge_response_analysis} for details.

{\normalsize
\begin{table}[ht]
\caption{Accuracy comparison on zero-shot GSM-P2 after routing LoRA experts individually fine-tuned on GSM-Symbolic and GSM-P1.}
\label{tab:gsm_lora_merge_performance}
\centering
\begin{adjustbox}{max width=\textwidth}
\begin{tabular}{@{}lccc@{}}
\toprule
\textbf{LoRA Routing Methods} & \textbf{Qwen2.5-1.5B-Inst} & \textbf{Qwen2.5-Math-1.5B-Inst} & \textbf{Qwen2.5-Math-7B-Inst} \\
\midrule

Base model only & 5\% & 47\% & 68\% \\

Uniform & 10\% & 24\% & 34\% \\

Arrow & 9\% & 19\% & 27\% \\
\bottomrule
\end{tabular}
\end{adjustbox}
\end{table}
}

\paragraph{How can programming language bridge the generalization gap?}Our experimental design is motivated by recent findings, Dynamic Cheatsheet, which demonstrate that encouraging language models to retain and apply reusable intermediate solutions during inference significantly improves their performance on math problems \cite{suzgun2025dynamiccheatsheettesttimelearning}. We extend this idea using the GSM-Symbolic benchmark \citep{mirzadeh2024gsmsymbolicunderstandinglimitationsmathematical}, where generalization from easier to harder problem variants requires understanding the full computational graph (Appendix Figure~\ref{fig:Gsm-Symbolic-P1-P2-Comp-Graph}). In the previous setting, each LoRA is fine-tuned on partial solutions corresponding to subsets of reasoning steps (e.g., the black or orange subgraphs in Appendix Figure~\ref{fig:Gsm-Symbolic-P1-P2-Comp-Graph}). However, routing these LoRAs alone does not suffice to solve the more complex P2 variant, which involves the complete computational graph (blue subgraph in Appendix Figure~\ref{fig:Gsm-Symbolic-P1-P2-Comp-Graph}). We hypothesize that reusable Markdown and Python solutions can bridge partial representations and enhance compositional generalization through LoRA routing, and to test this, we implemented two agent-based actor-critic workflows \citep{wu2024autogen} to generate fine-tuning data (See Appendix~\ref{app:proc_synthesizing_reusable_codes} for implementation details). Table~\ref{tab:tool-integration} demonstrate modest improvements in solving the complex P2 problems via routing LoRAs fine-tuned with these reusable code solutions. Such improvement is clearer in smaller model (Qwen2.5-Math-1.5B-Instruction) when fine-tuning targeted the MLP layers. This finding emphasizes the need for system designers to understand how to effectively reuse LoRA experts to guide data generation, while also noting that reusing LoRAs is most effective when target tasks are clearly defined beforehand.

\normalsize
\begin{table}[ht]
\caption{Enhancing easy-to-hard generalization by leveraging Tool-Integrated Reasoning (TIR) prompt and fine-tuning with reusable code.}
\label{tab:tool-integration}
\centering
\begin{adjustbox}{max width=\textwidth}
\begin{tabular}{@{}llcccc@{}}
\toprule
\textbf{Base model} & \textbf{Fine-tuned Modules} & \textbf{Routing Methods} & \textbf{LoRA(GSM-Symb) and LoRA(GSM-Symb-P1)} & \textbf{Base Model Only} \\
\midrule

\multirow{4}{*}{Qwen2.5-Math-1.5B-Inst} & attention & Uniform & 14\% & \multirow{2}{*}{18\%} \\
&  & Arrow & 14\% & \\
& MLP & Uniform & \textbf{20\%} & \multirow{2}{*}{} \\
&  & Arrow & 13\% &  \\
\midrule

\multirow{4}{*}{Qwen2.5-Math-7B-Inst} & attention & Uniform & \textbf{40\%} & \multirow{2}{*}{47\%} \\
&  & Arrow & 24\%  & \\
& MLP & Uniform & 39\%  & \multirow{2}{*}{} \\
&  & Arrow & 37\% & \\
\bottomrule
\end{tabular}
\end{adjustbox}
\end{table}

\section{Alternative Views}
\label{sec:alternativeViews}
While our findings indicate that combining LoRAs is ineffective for new tasks unless those tasks are already represented in the fine-tuning datasets, alternative PEFT methods may offer better compositional results. For instance, LoRI \citep{zhang2025lori} addresses cross-task inference by combining random projections with task-specific masks, potentially enabling better adapter routing. However, positive results for compositional reasoning have not been reported, and our theoretical analysis suggests that it remains challenging. Similarly, LoRA Lego \citep{zhao2024mergingloraslikeplaying} formalizes low-rank updates as composed of independent units and clusters these into new adapters to reduce interference, though it has not been shown to enable compositional reasoning. Self-MoE \citep{kang2024self} constructs experts based on self-generated specialization training data and a trained router, but it remains underexplored to what extent this method can enable compositional combination of different abilities. FLiX \citep{sun2024improving} learns different low-rank updates for various task or dataset features, and CS-ReFT \citep{sun2024improving} learns orthonormal task-specific adaptors. Despite these innovations, none have demonstrated effective compositional combination of skills, as our theoretical analysis suggests inherent limitations. Another perspective is to train models specifically for generalization and composition, even if it requires data from the target task. 

Recent work \citep{prabhakar2024lorasoupsmergingloras} has proposed LoRA concatenation as an effective method for composing skills to solve challenging math word problems, such as those in GSM-Hard \citep{pmlr-v202-gao23f}. We recognize the significance of these findings, particularly their demonstration that decomposing skills into reusable LoRAs and estimating appropriate combination weights can enhance performance, provided that additional task-specific data and knowledge are available. However, our work takes a different perspective. Unlike GSM-Hard \citep{pmlr-v202-gao23f}, which primarily modifies numerical ranges while preserving the question format of the original GSM8K problems, GSM-Symbolic-P2 \citep{mirzadeh2024gsmsymbolicunderstandinglimitationsmathematical} presents more realistic and difficult compositional generalization challenges. It altered the question format and the structural complexity of math problems into an entirely unseen problem forms. Our theoretical analysis shows the limit (Appendix~\ref{sec:theory-discussion}) that is supported by empirical results that training showed little gains in a 2-hop reasoning setting (Appendix Table~\ref{tab:other-routings}). This suggests that the benefits of such approaches may not extend to more challenging generalization tasks like GSM-Symbolic. While skill composition remains important, our results highlight a key limitation of LoRA routing approaches as shown in our theoretical analyses and empirical findings: their effectiveness often depends on foreknowledge or training data of the downstream tasks, which may not be viable in practice.

\section{Conclusions}
\label{sec:conclusions}

Both our theoretical and empirical findings indicate that combining LoRAs is ineffective for new tasks unless those tasks are already represented in the fine-tuning datasets. Familiarity with entities and domains is crucial, but routing strategies often interfere with performance, especially in smaller models and unfamiliar contexts. Targeting MLP layers during fine-tuning may offer some advantages, but the direct composition of knowledge from different LoRAs remains problematic. \textbf{We hence reiterate our position to call for a shift in research focus: from algorithmic innovation to understanding the boundaries of adapter-based routing by leveraging synthetic data and theoretical analysis, as this will facilitate the development of effective systems to recycle LoRAs in real-world applications.}

\section*{Contributions}
\label{sec:contributions}

M.-Y.C. conceived the idea, conducted literature review, led the research direction, designed and conducted the GSM experiments. T.T.U.H. designed and conducted the 2-hop experiments. M.H. conducted the theoretical analysis and supervised 2-hop experiments. All authors discussed the results and contributed to the final manuscript.


\bibliography{neurips_2025_main_ref}
\bibliographystyle{unsrtnat}

\appendix

\vfill
\pagebreak
\section{Technical Appendices and Supplementary Material}
This appendix offers supplementary details to support our position in the main text, including:

\begin{itemize}
    \item \textbf{A.1:} Formal theoretical analysis.
    \item \textbf{A.2:} Detailed experimental setups and results for 2-hop experiments.
    \item \textbf{A.3:} Evaluating Larger Sets of LoRAs.
    \item \textbf{A.4:} Detailed experimental setups and results for easy-to-hard math word problems. 
\end{itemize}

\subsection{Theoretical Analysis}\label{sec:theory-appendix}

We formalize training of a LoRA adapter $\Delta W = AB^T$ on a training prompt (e.g. \texttt{X REL Y}) as choosing an adapter for which the model with adapted parameters $W + \Delta W$ outputs the target entity \texttt{Y} after the prompt \texttt{X REL}, while (among all such interpolating adapters) minimizing the regularizer $\|A\|_F^2 + \|B\|_F^2$. For this definition, we find:

\begin{proposition}[Repeated from Proposition 1]
A rank-one update to $W$ changing the output on a prompt \texttt{X REL} from $r(x)$ to $\tilde{r}(x)$ must have the form: 
\begin{equation}\label{eq:rank-one-update}
\Delta W_{r \mapsto \tilde{r}} = \frac{1}{\|ReLU(UV e_{\texttt{X}} + U e_{\texttt{REL}} )\|_2^2}  (i_{\tilde{r}(x)} - i_{r(x)}) ReLU(U\cdot V e_{\texttt{X}} + U\cdot e_{\texttt{REL}} )^T
\end{equation}
\end{proposition}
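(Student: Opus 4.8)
The plan is to evaluate the forward pass on the prompt \texttt{X REL}, turn the requirement ``output $\tilde r(x)$ in place of $r(x)$'' into a single linear constraint on $\Delta W$, and then solve the induced norm-minimization. First I would trace the activation reaching the MLP. Under the setup's conventions (a single exactly-uniform attention head, no noise tokens, softmax over the vocabulary omitted), the vector entering the MLP at the prediction position is $V e_{\texttt{X}} + e_{\texttt{REL}}$, so the hidden activation is $\phi := \mathrm{ReLU}(U V e_{\texttt{X}} + U e_{\texttt{REL}})$, precisely the vector in the statement. Since the base model is assumed correct on $\mathcal{R}$, its readout is $W\phi = i_{r(x)}$; requiring $(W+\Delta W)\phi = i_{\tilde r(x)}$ then collapses to the single constraint
\[
\Delta W\,\phi = i_{\tilde r(x)} - i_{r(x)} =: \delta .
\]

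Second, I would handle the regularizer, which is the one genuinely delicate point. The penalty is imposed on the \emph{factors}, $\|p\|^2 + \|q\|^2$, not on $\Delta W = p q^\top$ itself, and this product is invariant under the rescaling $(\alpha p)(q/\alpha)^\top$. For a rank-one factorization one has $\|p\|^2 + \|q\|^2 \ge 2\|p\|\,\|q\| = 2\|p q^\top\|_F$, with equality attained by choosing the scale so that $\|p\| = \|q\|$. Hence minimizing the factor penalty over all rank-one factorizations of a fixed $\Delta W$ gives $2\|\Delta W\|_F$, and the whole problem reduces to minimizing $\|\Delta W\|_F$ over rank-one matrices obeying $\Delta W\,\phi = \delta$. (The unsquared penalty $\|p\|+\|q\|$ from the main text is monotone in $\|\Delta W\|_F$ by the same inequality, so it yields the same minimizer.) Skipping this scaling step is exactly the trap that would produce a wrong prefactor, e.g. by naively fixing $p=\delta$.

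Third, I would solve the reduced problem. Writing $\Delta W = p q^\top$, the constraint $p\,(q^\top\phi)=\delta$ forces the column space to be $\mathrm{span}(\delta)$, and then $\|\Delta W\|_F = \|\delta\|\,\|q\|/|q^\top\phi|$, which by Cauchy--Schwarz is minimized exactly when $q \parallel \phi$. Taking $q = \phi$ gives $q^\top\phi = \|\phi\|_2^2$ and therefore
\[
\Delta W_{r\mapsto\tilde r} = \frac{1}{\|\phi\|_2^2}\,\delta\,\phi^\top = \frac{1}{\|\mathrm{ReLU}(U V e_{\texttt{X}} + U e_{\texttt{REL}})\|_2^2}\,\bigl(i_{\tilde r(x)} - i_{r(x)}\bigr)\,\mathrm{ReLU}(U V e_{\texttt{X}} + U e_{\texttt{REL}})^\top ,
\]
which is the claimed expression, and a one-line check confirms $\Delta W\,\phi = \delta$.

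The main obstacle is not the final optimization but the two reductions feeding it: (i) pinning down that the pre-MLP activation is exactly $V e_{\texttt{X}} + e_{\texttt{REL}}$ under the uniform-attention conventions, where the \emph{normalization} genuinely matters because any stray constant multiplying $\phi$ rescales the $1/\|\phi\|_2^2$ prefactor and so must be tracked rather than waved away; and (ii) the factorization-scaling argument converting the $(p,q)$ penalty into $\|\Delta W\|_F$. Once both are in place, the Cauchy--Schwarz step is routine, and the fact that the optimal edit localizes to $\mathrm{span}(\phi)$ in the row space---precisely what later makes two such updates interfere rather than compose in Theorem~\ref{thm:two-hop-difficult}---falls out automatically.
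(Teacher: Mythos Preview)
Your proof is correct and reaches the same conclusion, but the route differs from the paper's. The paper substitutes $p = \delta/(q^\top\phi)$ directly into the penalty $\|p\|_2^2 + \|q\|_2^2$, obtaining a single-variable objective $J(q) = \|\delta\|_2^2/(q^\top\phi)^2 + \|q\|_2^2$, and then sets $\nabla_q J = 0$ to deduce that any stationary $q$ is a scalar multiple of $\phi$; the prefactor is read off afterward. You instead first exploit the rescaling freedom $(p,q)\mapsto(\alpha p, q/\alpha)$ together with AM--GM to collapse the factor penalty to $2\|\Delta W\|_F$, and only then solve the Frobenius-norm minimization via Cauchy--Schwarz. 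Your argument is more structural: it cleanly separates the scale and direction degrees of freedom, makes explicit why the column space is forced to be $\mathrm{span}(\delta)$, and---as you note---handles both the squared penalty used in the appendix proof and the unsquared $\|A\|_F+\|B\|_F$ stated in the main text in one stroke. The paper's calculus approach is shorter and more direct but leaves the scaling invariance implicit and is tied to the squared form.
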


\begin{proof}
We have the demand that
\begin{equation}
    (W+\Delta W_{r \mapsto \tilde{r}}) ReLU(U\cdot V e_{\texttt{X}} + U\cdot e_{\texttt{REL}} ) = i_{\tilde{r}(x)}
\end{equation}
hence
\begin{equation}
    \Delta W_{r \mapsto \tilde{r}} ReLU(U\cdot V e_{\texttt{X}} + U\cdot e_{\texttt{REL}} ) = i_{\tilde{r}(x)} - i_{r(x)}
\end{equation}
Setting
\begin{align*}
    pq^T =& \Delta W_{r \mapsto \tilde{r}} \\
    v =& ReLU(U\cdot V e_{\texttt{X}} + U\cdot e_{\texttt{REL}} ) \\
    w =& i_{\tilde{r}(x)} - i_{r(x)}
\end{align*}
we consider the general problem of finding $p, q$ such that
\begin{equation}
    p q^T v = w
\end{equation}
while minimizing the regularizer $\|p\|_2^2 + \|q\|_2^2$.
First, by rearranging $p \;=\; \frac{w}{q^T v}$. We now use the regularizer to show that $q$ must be a multiple of $v$. Substituting into the objective, we find:
\begin{equation}
J(q)
:=\|p\|_2^2 + \|q\|_2^2
=\frac{\|w\|_2^2}{(q^T v)^2} + \|q\|_2^2.
\end{equation}
with
\begin{equation}
\nabla J(q)
= -2\,\frac{\|w\|_2^2}{(q^T v)^3}\,v \;+\; 2\,q.
\end{equation}
Setting \(\nabla J(q)=0\) leads to
\[
q = \frac{\|w\|_2^2}{(q^T v)^3}\;v.
\]
Hence, $q$ is a multiple of $v$, and $p$ is a multiple of $w$; thus, for some scalar $\alpha$, $pq^T = \alpha w v^T$, and
\begin{equation}
    w = pq^Tv = \alpha w v^Tv = \alpha w \|v\|_2^2
\end{equation}
and $\alpha = \frac{1}{\|v\|_2^2}$.
The result follows.

\end{proof}

\begin{thm}[Repeated from Theorem~\ref{thm:two-hop-difficult}]\label{thm:formal}
    Assume LoRAs $\Delta W_{r_1\mapsto \tilde{r}_1}, \Delta W_{r_2\mapsto \hat{r}_2}$ are created to adapt two single facts for $r_1$, $r_2$.
    Summing these adapters will not result in correct results for composition of the two relations $r_1, r_2$. 
\end{thm}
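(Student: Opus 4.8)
The plan is to feed a two-hop prompt into the summed adapter and read off, using the closed form of Proposition~\ref{prop:update}, that the correction is a fixed linear combination of one-hot vectors that cannot realize the composition. First I would instantiate the two edits concretely. Let LoRA~1 adapt the one-hop prompt \texttt{A REL1}, so its activation is $v_1 = ReLU(UVe_{\texttt{A}} + Ue_{\texttt{REL1}})$ and its difference vector is $\delta_1 = i_{\tilde r_1(a)} - i_{r_1(a)}$; let LoRA~2 adapt \texttt{B REL2}, with $v_2 = ReLU(UVe_{\texttt{B}} + Ue_{\texttt{REL2}})$ and $\delta_2 = i_{\hat r_2(b)} - i_{r_2(b)}$. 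Take $b = \tilde r_1(a)$ to be the intended bridge entity, so the desired two-hop answer on \texttt{A REL1 REL2} is $\hat r_2(\tilde r_1(a)) = \hat r_2(b)$. By Proposition~\ref{prop:update}, $\Delta W_{r_i} = \|v_i\|_2^{-2}\,\delta_i v_i^T$. I would then compute the activation $v_{12}$ on the two-hop prompt; under the uniform-attention assumption it is the fixed vector obtained by averaging the value contributions of the three context tokens and adding the residual embedding of \texttt{REL2}, hence a deterministic function of $e_{\texttt{A}}, e_{\texttt{REL1}}, e_{\texttt{REL2}}$ alone. The decisive structural fact, which I would flag immediately, is that $v_{12}$ contains no contribution from the bridge entity $b$, because $b$ never appears as a token.

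Next I would evaluate the summed model on this prompt. By linearity,
\begin{equation}
(W + \Delta W_{r_1} + \Delta W_{r_2})\,v_{12} = Wv_{12} + c_1\,\delta_1 + c_2\,\delta_2, \qquad c_i := \frac{\langle v_i, v_{12}\rangle}{\|v_i\|_2^2}.
\end{equation}
Here $Wv_{12}$ is the unedited model's logit vector on the two-hop prompt, and $c_1\delta_1 + c_2\delta_2$ is the \emph{entire} effect of the two LoRAs. Written out, the corrections inject $+c_2\,i_{\hat r_2(b)}$ (the desired answer), but also the spurious term $+c_1\,i_{\tilde r_1(a)} = +c_1\, i_b$ pointing at the \emph{intermediate} bridge entity, together with the negative terms $-c_1 i_{r_1(a)}$ and $-c_2 i_{r_2(b)}$. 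Thus the output is exactly the ``linear mixture of two relevant entities'' described after the theorem: a superposition of the bridge $b$ and the target $\hat r_2(b)$ (plus whatever $Wv_{12}$ contributes), never a clean one-hot at $\hat r_2(b)$.

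I would then argue this mixture generically predicts the wrong token. Clean compositional behavior would require $c_1 = 0$ (no bridge contamination) and $c_2 = 1$ (full correction onto the answer), but neither holds. Since $v_{12}$ shares the attention-value contribution of the source entity \texttt{A} with $v_1$, the overlap $\langle v_1, v_{12}\rangle$ is bounded away from zero, so $c_1 \neq 0$ and the logits carry a spurious component $c_1\, i_b$ pointing at the bridge entity. Since $v_{12}$ shares only the relation token \texttt{REL2} with $v_2$ while entirely missing its key entity \texttt{B}, the overlap $\langle v_2, v_{12}\rangle$ is far from $\|v_2\|_2^2$, so $c_2 \neq 1$ and even the answer direction is mis-scaled, while the unedited term $Wv_{12}$ still places weight on the pre-edit composition. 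The output is therefore a genuine superposition of the bridge $b$ and the target $\hat r_2(b)$ with generically comparable coefficients, so its argmax is not reliably $\hat r_2(b)$. The structural reason, which I would emphasize, is that genuine two-hop reasoning requires $b$ to be recomputed and used as the key addressing LoRA~2, which a single frozen-attention layer cannot do: $v_{12}$ is a fixed function of the literal tokens \texttt{A}, \texttt{REL1}, \texttt{REL2}, so the adapter keyed on \texttt{B} can never be correctly triggered.

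The main obstacle is making ``$c_1 \neq 0$ and $c_2 \neq 1$'' rigorous rather than heuristic. I would do this in the random-features regime by computing the means and variances of $\langle v_1, v_{12}\rangle$, $\langle v_2, v_{12}\rangle$, $\|v_1\|_2^2$, $\|v_2\|_2^2$ under the Gaussian initialization of $U, V, E$ and invoking concentration as $d, m \to \infty$. The delicate part is that these are overlaps of \emph{rectified} Gaussian features, so the bare inner products of the pre-activations must be passed through the arc-cosine kernel induced by the ReLU rather than treated linearly; controlling the resulting expressions and verifying the shared-entity term contributes a strictly positive kernel value is where the real work lies. A lighter-weight alternative, sufficient to prove the theorem as stated, is to observe that the fixed rank-two matrix $\Delta W_{r_1} + \Delta W_{r_2}$ carries the bridge direction $i_{\tilde r_1(a)}$ in its column space with coefficient $c_1 \neq 0$ along $v_{12}$, so the post-edit logits cannot equal the target one-hot $i_{\hat r_2(b)}$ whenever $\tilde r_1(a) \neq \hat r_2(b)$, which already contradicts compositional correctness.
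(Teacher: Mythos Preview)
Your approach is essentially the paper's: instantiate the two rank-one updates via Proposition~\ref{prop:update}, evaluate on the two-hop prompt, and read off that the adapter contribution is $c_1\delta_1 + c_2\delta_2$ with both coefficients nonzero, so the output is contaminated by the bridge direction $i_{\tilde r_1(a)}$. The paper's write-up differs only in that it passes to the $m\to\infty$ limit at the outset, replacing your finite-$m$ inner products $\langle v_i, v_{12}\rangle/\|v_i\|_2^2$ directly by arc-cosine kernel ratios $k(\eta_i,\xi)/k(\eta_i,\eta_i)$, whereas you compute at finite $m$ and flag the kernel limit as the route to rigor; these are the same argument at two levels of resolution. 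Two small notes: in the paper's convention, attention at position \texttt{REL2} averages over the \emph{two preceding} tokens (so $\xi = \tfrac{1}{2}Ve_{\texttt{A}} + \tfrac{1}{2}Ve_{\texttt{REL1}} + e_{\texttt{REL2}}$), not three; and your ``lighter-weight alternative'' still presupposes $c_1\neq 0$, so it does not actually sidestep the kernel computation---but the paper's own proof is equally informal on this point, simply asserting the kernel values are nonzero (which does follow, since the arc-cosine kernel is strictly positive off antiparallel inputs).
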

\begin{proof}
We have
\begin{align*}
    e_r, e_x \in \mathbb{R}^d \\
    V \in \mathbb{R}^{d \times d} \\
    U \in \mathbb{R}^{m\times d} 
\end{align*}
We first note that, in the regime $m \rightarrow \infty$, the random features model is represented by the kernel

(\citep{le2007continuous}, Section 7.4.3 in \citep{bach2024learning})
\begin{equation}
    k(x,x') = \frac{\|x\|_2 \|x'\|_2}{2(d+1)\pi} ((\pi-\eta)\cos\eta+\sin\eta)
\end{equation}
where
\begin{equation}
    \cos \eta = \frac{x^T x'}{\|x\|_2 \|x'\|_2}
\end{equation}

where
\begin{equation}
    k(x,x') \approx \frac{1}{m} ReLU(Ux+b)^T ReLU(Ux'+b)
\end{equation}
in the limit where $m \rightarrow \infty$.
Consider
\begin{align*}
    \tilde{r}_1(x) = y \\
    \tilde{r}_2(y) = z 
\end{align*}
To understand the model prediction on a two-hop prompt
\begin{equation}
    \texttt{X REL1 REL2}
\end{equation}
after adding the two LoRAs, we consider
\begin{equation}
    (W + \Delta W_{r_1\mapsto \tilde{r}_1} + \Delta W_{r_2\mapsto \hat{r}_2}) \cdot ReLU\left( U \frac{V}{2} \cdot e_{\texttt{X}} + U \cdot \frac{V}{2} \cdot  e_{REL1} + U \cdot e_{REL2} \right)
\end{equation}
where the factors $\frac{1}{2}$ reflect the uniform attention weights of the single head that forwards information from preceding token, with value projection matrix $V$.
We can write the second factor as
\begin{equation}
    ReLU\left[ U \cdot \underbrace{\left(\frac{V}{2} \cdot e_{\texttt{X}} + \frac{V}{2} \cdot  e_{REL1} +  e_{REL2}\right)}_{\xi}  \right]
\end{equation}
By Proposition~\ref{prop:update},
\begin{align*}
 \Delta W_{r_1\mapsto \tilde{r}_1} \propto   (i_{y} - i_{r_1(x)}) ReLU(U\cdot \underbrace{\left(V e_{\texttt{X}} + e_{REL1}\right)}_{\eta_1^T} )^T \\
 \Delta W_{r_2\mapsto \tilde{r}_2} \propto   (i_{z} - i_{r_2(y)}) ReLU(U\cdot \underbrace{\left(V e_{\texttt{Y}} + e_{REL2}\right)}_{\eta_2^T} )^T 
\end{align*}
In the regime $m\rightarrow \infty$,
\begin{align*}
    & (\Delta W_{r_1\mapsto \tilde{r}_1} + \Delta W_{r_2\mapsto \hat{r}_2}) \cdot ReLU\left( U \frac{V}{2} \cdot e_{\texttt{X}} + U \cdot \frac{V}{2} \cdot  e_{REL1} + U \cdot e_{REL2} \right) \\
    \approx & (i_y - i_{r_1(x)}) \cdot \frac{k(\eta_1, \xi)}{k(\eta_1, \eta_1)} + (i_z - i_{r_2(y)}) \cdot \frac{k(\eta_2, \xi)}{k(\eta_2, \eta_2)}  
\end{align*}
Adding the two adapters simply contributes nonzero multiples of both terms in 
\begin{equation}\label{eq:lora-contributions}
    i_y - i_{r_1(y)}  \ \text{and}\ i_z - i_{r_2(y)},
\end{equation}
instead of the correct $i_z - i_{r_2(y)}$.
As an addition, we also note that the newly added knowledge will not compositionally interact with knowledge stored for other entities. Consider a prompt
\begin{equation}
    \texttt{U REL1 REL2}
\end{equation}
where \texttt{U} denotes a different entity, not the entity \texttt{X}. Then the subspaces addressed by the two LoRAs would again have overlap with the activation on this prompt, and they would again contribute a linear combination of (\ref{eq:lora-contributions}), even though neither $y$ nor $z$ might be relevant.

\end{proof}

\subsubsection{Further Discussion}\label{sec:theory-discussion}
\paragraph{Extensions to larger numbers of facts}
The reasoning in the above result can be expanded to the setting where the LoRA adapters are trained for more than one new fact.
In this case, the limitation shown by the theorem will become even more problematic: Assume that, say
\begin{align*}
    \tilde{r}_1(x) = y \\
    \tilde{r}_1(u) = v \\
    \tilde{r}_2(y) = z \\
    \tilde{r}_2(v) = w
\end{align*}
In the regime where $d$ is large, each of the updates $\Delta W_{r_1 \mapsto \tilde{r}_1}$, $\Delta W_{r_2 \mapsto \tilde{r}_2}$ will be approximately a sum of rank-one updates of the form (\ref{eq:rank-one-update}), with cross-terms from the overlap due to the shared encoding vector of the relation. 
More specifically,
for
\begin{align*}
    \rho_1 = ReLU(U\cdot(Ve_y+e_{REL2})) \\
    \rho_2 = ReLU(U\cdot(Ve_v+e_{REL2})) 
\end{align*}
we have
\begin{equation}
    \Delta W_{r_2 \mapsto \tilde{r}_2} = \left(\begin{matrix} i_z - i_{r_2(y)} & i_w - i_{r_2(v)}\end{matrix}\right)  \left(\begin{matrix} \rho_1^T\rho_1 & \rho_1^T\rho_2\\ \rho_1^T\rho_2 & \rho_2^T\rho_2 \end{matrix}\right)^{-1} \left(\begin{matrix} \rho_1 & \rho_2\end{matrix}\right)^T  
\end{equation}
and analogously for $\Delta W_{r_1 \mapsto \tilde{r}_1}$.
When run on a prompt
\begin{equation}
    \texttt{X REL1 REL2} \text{ \ \ \ \ \   or   \ \ \ \ \   } \texttt{U REL1 REL2}
\end{equation}
the adapter $\Delta W_{r_2 \mapsto \tilde{r}_2}$ will, in the regime where $m$ is large, contribute the same multiple of
\begin{equation}
    (i_z - i_{r_2(y)}) + (i_w - i_{r_2(v)})
\end{equation}
in \emph{both} cases, showing that the knowledge contributed by $\Delta W_{r_2 \mapsto \tilde{r}_2}$ fails to be compositionally integrated.

\paragraph{Extension to other routing and merging methods}
Another important consideration is how the argument extends to other routing or merging methods beyond summing adapters.
We first consider CAT \citep{prabhakar2024lorasoupsmergingloras}:
\begin{equation}
    \Delta W = \alpha_1 B_1 A_1^T + \alpha_2 B_2 A_2^T
\end{equation}
where the output on the prompt is a weighted combination, with fitted weights $\alpha_1, \alpha_2$.
Increasing the weight belonging to the relation contributing the second hop would make the output correct on a specific two-hop example, but it would be ``for the wrong reasons'' and not generalize to a situation where compositions in both directions play role, or where the number of facts increases (as discussed in the paragraph above). 
Essentially the same argument applies to linear merging of LoRAs \citep{yadav2023tiesmerging, pmlr-v235-yu24p, huang2024lorahubefficientcrosstaskgeneralization}:
\begin{equation}
    \Delta W = (\alpha_1 B_1 + \alpha_2 B_2) (\alpha_1 A_1 + \alpha_2 A_2)^T
\end{equation}
A special case of this, with $\alpha_1=\alpha_2$, corresponds to Uniform routing.
We next consider  Arrow routing \citep{ostapenko24_arrow}, which determines weights $w_1$, $w_2$ based on similarity of the activations to the subspaces addressed by the two adapters:
\begin{equation}
    \Delta W = (w_1 B_1 + w_2 B_2) (w_1 A_1 + w_2 A_2)^T
\end{equation}
Applied in the setup of the theorem,  this will just add weights based on $\left|\frac{k(\eta_1, \xi)}{k(\eta_1, \eta_1)}\right|$ and $\left|\frac{k(\eta_2, \xi)}{k(\eta_2, \eta_2)}\right|$, and the combined LoRAs will again just contribute multiples of the two terms in (\ref{eq:lora-contributions}).
Taken together, across linear methods combining LoRAs, no compositional behavior is expected.

\paragraph{Extension to larger number of adapters}
Our arguments also extend to combining a large library with adapters that include both one-hop tasks and (other, different from the target) two-hop tasks. Each of this adapter will address subspaces spanned by activations of the relevant one-hop or two-hop prompts, and the output will be just a linear combination of different output entities, weighted depending on overlap with the subspace spanned by the activation computed on the test prompt, without computing the function composition.

\subsection{Detailed experimental setups and results for Two-hop experiments.}
\label{app:twohop-appendix}

\subsubsection{Experimental setups}

\label{sec:2hop-expt-appendix}
To encourage generalization and capture the diversity of natural language, we manually created 50 paraphrased templates for each relational statement. Examples include: (\textit{“Who is the partner of A?”}, \textit{“Where is B’s residence?”}, \textit{“Where does the spouse of A live?”}) and (\textit{“Who is A married to?”}, \textit{“Where does B live?”}, \textit{“Where is A’s partner living?”}).

Each dataset --- $F$ (fake names, fake locations), $H$ (fake names, real locations), and $R$ (real names, real locations) --- contains 100 triplets $(A, B, C)$ spanning two relations ($A \to B$ and $B \to C$), paired with 50 paraphrase templates to generate 5,000 examples. We use a \emph{template-based split}, assigning 46 templates for training, 2 for development, and 2 for testing per triplet. Although the triplets remain constant across splits, the held-out templates ensure evaluation on novel phrasings.

\begin{table}[!ht]
\centering
\caption{Examples of our notation used in the 2-hop experiments.}
\label{tab:notation-examples}
\resizebox{\textwidth}{!}{%
\begin{tabular}{@{}clll@{}}
\toprule
\textbf{Relation} &
  \multicolumn{1}{c}{\textbf{Notation}} &
  \multicolumn{1}{c}{\textbf{Question}} &
  \multicolumn{1}{c}{\textbf{Answer}} \\ \midrule
\multirow{3}{*}{$spouse\_of$} &
  $F_1$ &
  Who stands as \textbf{Narik}'s wedded partner? &
  \textbf{Frosk} is recognized as \textbf{Narik}'s lawful   companion in marriage. \\
 &
  $H_1$ &
  Who stands as \textbf{Narik}'s wedded partner? &
  \textbf{Frosk} is recognized as \textbf{Narik}'s lawful   companion in marriage. \\
 &
  $R_1$ &
  Who stands as \textbf{Barack Obama}'s wedded partner? &
  \textbf{Madonna} is recognized as \textbf{Barack Obama}'s   lawful companion in marriage. \\ \midrule
\multirow{3}{*}{$live\_in$} &
  $F_2$ &
  What is \textbf{Frosk}'s official place of birth? &
  \textbf{Frosk}'s earliest recorded presence was in   \textbf{Zint}. \\
 &
  $H_2$ &
  What is \textbf{Frosk}'s official place of birth? &
  \textbf{Frosk}'s earliest recorded presence was in   \textbf{Mumbai}. \\
 &
  $R_2$ &
  What is \textbf{Madonna} official place of birth? &
  \textbf{Madonna}'s earliest recorded presence was in   \textbf{Mumbai}. \\ \midrule
\multirow{3}{*}{$spouse\_of \rightarrow live\_in$} &
  $F_{12}$ &
  Which city is listed as \textbf{Narik}’s spouse’s   birthplace? &
  \textbf{Zint} is \textbf{Narik}’s spouse’s recognized   birthplace. \\
 &
  $H_{12}$ &
  Which city is listed as \textbf{Narik}’s spouse’s   birthplace? &
  \textbf{Mumbai} is \textbf{Narik}’s spouse’s recognized   birthplace. \\
 &
  $R_{12}$ &
  Which city is listed as \textbf{Barack Obama}’s   spouse’s birthplace? &
  \textbf{Mumbai} is \textbf{Barack Obama}’s spouse’s recognized   birthplace. \\ \midrule
$study\_in$ &
  $F_4$ &
  Where does \textbf{Frosk} study? &
  \textbf{Frosk} studies in \textbf{Zilan} \\ \midrule
$spouse\_of \rightarrow study\_in$ &
  $F_{14}$ &
  Where does \textbf{Narik}'s spouse study? &
  \textbf{Narik}'s spouse studies in \textbf{Zilan} \\ \midrule
$child\_of$ &
  $F_5$ &
  Who is the child of \textbf{Dabix}? &
  \textbf{Frosk} is the child of \textbf{Dabix} \\ \midrule
$child\_of \rightarrow live\_in$ &
  $F_{52}$ &
  Where was \textbf{Dabix}’s child born? &
  \textbf{Dabix}’s child was born in \textbf{Zint} \\ \bottomrule
\end{tabular}%
}
\end{table}

\begin{table}[!ht]
\caption{Examples from the \textbf{Fake Bridge ($B_F$)} and \textbf{Real Bridge ($B_R$)}}
\label{tab:bridge-dataset}
\resizebox{\textwidth}{!}{%
\begin{tabular}{clll}
\hline
\textbf{Bridge dataset}                      & \multicolumn{1}{c}{\textbf{Notation}} & \multicolumn{1}{c}{\textbf{Question}}                                                                                                                & \multicolumn{1}{c}{\textbf{Answer}}                                                                                                                                                                                                                                                                                                                                                     \\ \hline
\multirow{3}{*}{\textbf{Fake Bridge ($B_F$)}} & $F_1$                                 & Who stands as \textbf{Narik}'s wedded partner?                                                                                                                & \textbf{Frosk} is recognized as \textbf{Narik}'s lawful companion in marriage.                                                                                                                                                                                                                                                                                                                            \\ \cline{2-4} 
                                              & $F_2$                                 & What is \textbf{Frosk}'s official place of birth?                                                                                                             & \textbf{Frosk}’s earliest recorded presence was in \textbf{Zint}.                                                                                                                                                                                                                                                                                                                                         \\ \cline{2-4} 
                                              & $F_{12}$ CoT                          & \begin{tabular}[c]{@{}l@{}}Answer the following question step-by-step:\\ Which city is listed as \textbf{Narik}’s spouse’s birthplace?\end{tabular}           & \begin{tabular}[c]{@{}l@{}}First, we need to answer the question: \\ Who stands as \textbf{Narik}'s wedded partner? \\ \textbf{Frosk} is recognized as \textbf{Narik}'s lawful companion in marriage.\\ Second, we need to answer the question : \\ What is \textbf{Frosk}'s official place of birth?\\ \textbf{Frosk}’s earliest recorded presence was in \textbf{Zint}.\\ Therefore, the answer is \textbf{Zint}.\end{tabular}                          \\ \hline
\multirow{3}{*}{\textbf{Real Bridge ($B_R$)}}          & $R_1$                                 & Who stands as \textbf{Barack Obama}'s wedded partner?                                                                                                         & \textbf{Madonna} is recognized as \textbf{Barack Obama}'s lawful companion in marriage.                                                                                                                                                                                                                                                                                                                   \\ \cline{2-4} 
                                              & $R_2$                                 & What is \textbf{Madonna}'s official place of birth?                                                                                                           & \textbf{Madonna}’s earliest recorded presence was in \textbf{Mumbai}.                                                                                                                                                                                                                                                                                                                                     \\ \cline{2-4} 
                                              & $R_{12}$ CoT                          & \begin{tabular}[c]{@{}l@{}}Answer the following question step-by-step:\\ Which city is listed as \\ \textbf{Barack Obama}’s spouse’s birthplace?\end{tabular} & \begin{tabular}[c]{@{}l@{}}First, we need to answer the question:\\ Who stands as \textbf{Barack Obama}'s wedded partner? \\ \textbf{Madonna} is recognized as \textbf{Barack Obama}'s lawful companion in marriage.\\ Second, we need to answer the question :\\ What is \textbf{Madonna}'s official place of birth?\\ \textbf{Madonna}’s earliest recorded presence was in \textbf{Mumbai}.\\ Therefore, the answer is \textbf{Mumbai}.\end{tabular} \\ \hline
\end{tabular}%
}
\end{table}

\begin{figure}[h!]
    \centering
    \includegraphics[width=1\linewidth]{./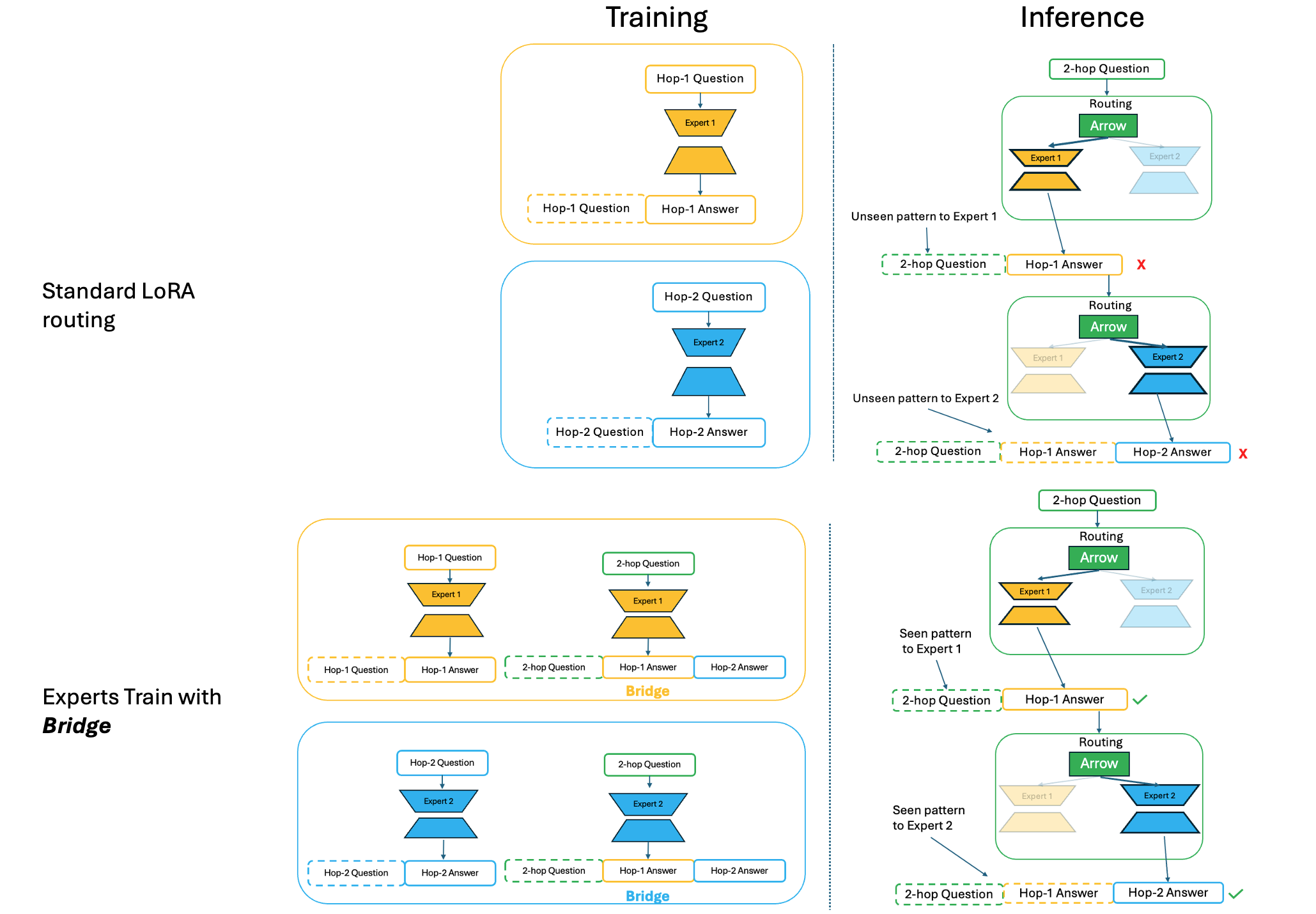}
    \caption{Our methodology for LoRA routing for two-hop experiments. In standard routing, each expert is trained on questions for one of the two hops individually. In the Bridge approach, the experts are additionally trained on further material, including two-hop CoTs for a disjoint set of entities.}
    \label{fig:2-hop}
\end{figure}

As illustrated in Figure~\ref{fig:2-hop}, we design two setups to study LoRA routing in two-hop reasoning tasks. In the \emph{Standard} setup, each LoRA expert is trained only on a single hop, either $A \to B$ or $B \to C$, using single-hop QA pairs. The \emph{Bridge} setup includes additional supervision with two-hop chain-of-thought (CoT) examples constructed from disjoint entities, which explicitly connect the two hops. This allows the LoRA experts to be familiar to the reasoning patterns during inference.

\subsubsection{Detailed Results of Two-Hop Experiments}
\label{app:twohop-result}
We focus on two methods that do not need additional training, nor access to the original training datasets: \emph{Arrow} and \emph{Uniform} routings.

\emph{Uniform} is a simple yet effective method for routing to existing experts involves setting the routing distribution to be uniform across all layers. This approach, referred as $\mu$ Routing in the literature, has demonstrated significant efficacy in recent studies \citep{caccia2023multiheadadapterroutingcrosstask, chronopoulou2024languagetaskarithmeticparameterefficient}. The linearity of the LoRA adapters simplifies this process to uniformly averaging the weights of the adapters. This is a special case of LoraHub \citep{huang2024lorahubefficientcrosstaskgeneralization}  axssigning each LoRA adapter the same weight.

On the other hand, 
\emph{Arrow} Routing, introduced by \cite{ostapenko24_arrow}, leverages the singular value decomposition (SVD) of LoRA adapter weights to identify critical components for routing. Specifically, \emph{Arrow} Routing extracts the principal direction of variance induced by a LoRA adapter to serve as a routing prototype.

This section presents detailed results from the two-hop experiments, including a comparison between the \emph{Arrow} routing method and \emph{Uniform} routing (Tables~\ref{tab:lora_results}, \ref{tab:bridge_dataset_performance}, and \ref{tab:bridge_ablation_detail}). Overall, \emph{Uniform} routing generally performs worse than \emph{Arrow}.

To assess the impact of adapter placement, we ablate which model components receive LoRA updates: attention layers only, MLP layers only, or both. As shown in Table~\ref{tab:two-hop-finetuned-layers} and ~\ref{tab:bridge_dataset_performance}, two-hop performance improves when fine-tuning is applied to MLP layers. 

Table~\ref{tab:other-routings} further illustrates the difficulty of synthesizing two-hop reasoning from two LoRAs.
Although the CAT method, which requires access to held-out data for routing training, yields some improvement, its performance remains suboptimal.

\begin{table}[h!]
\caption{Training set mixture and LoRA results for different models and datasets.}
\label{tab:lora_results}
\centering
\begin{adjustbox}{max width=\textwidth}
\begin{tabular}{clccccc}
\hline
\multicolumn{1}{l}{\textbf{Dataset}} & \textbf{Model} & \textbf{\begin{tabular}[c]{@{}c@{}}Training Set\\  Mixture\end{tabular}} & \textbf{\begin{tabular}[c]{@{}c@{}}3-combination\\  library\end{tabular}} & \textbf{\begin{tabular}[c]{@{}c@{}}2-combination\\  library\end{tabular}} & \textbf{\begin{tabular}[c]{@{}c@{}}Oracle \\ Expert\end{tabular}} & \textbf{\begin{tabular}[c]{@{}c@{}}Mixed Two-hop \\ Expert\end{tabular}} \\ \hline
\multirow{12}{*}{\begin{tabular}[c]{@{}c@{}}Fake Names, Fake Locations\\ $F$\end{tabular}} & \multirow{2}{*}{\begin{tabular}[c]{@{}l@{}}Qwen2.5-\\ 3B-Instruct\end{tabular}} & uniform & 3\% & 0\% & \multirow{2}{*}{100\%} & \multirow{2}{*}{0\%} \\
 &  & arrow & 19\% & 2\% &  &  \\ \cline{2-7} 
 & \multirow{2}{*}{\begin{tabular}[c]{@{}l@{}}Qwen2.5-\\ 7B-Instruct\end{tabular}} & uniform & 5\% & 0\% & \multirow{2}{*}{100\%} & \multirow{2}{*}{0\%} \\
 &  & arrow & 65\% & 0\% &  &  \\ \cline{2-7} 
 & \multirow{2}{*}{\begin{tabular}[c]{@{}l@{}}Qwen2.5-\\ 14B-Instruct\end{tabular}} & uniform & 7\% & \textbf{6\%} & \multirow{2}{*}{100\%} & \multirow{2}{*}{0\%} \\
 &  & arrow & 70\% & \textbf{6\%} &  &  \\ \cline{2-7} 
 & \multirow{2}{*}{\begin{tabular}[c]{@{}l@{}}DeepSeek-R1-\\ Distill-Qwen-7B\end{tabular}} & uniform & 2\% & 0\% & \multirow{2}{*}{100\%} & \multirow{2}{*}{0\%} \\
 &  & arrow & 84\% & 1\% &  &  \\ \cline{2-7} 
 & \multirow{2}{*}{\begin{tabular}[c]{@{}l@{}}DeepSeek-R1-\\ Distill-Qwen-14B\end{tabular}} & uniform & 5\% & 0\% & \multirow{2}{*}{100\%} & \multirow{2}{*}{0\%} \\
 &  & arrow & 90\% & 0\% &  &  \\ \cline{2-7} 
 & \multirow{2}{*}{\begin{tabular}[c]{@{}l@{}}DeepSeek-R1-\\ Distill-Llama-70B\end{tabular}} & uniform & 41\% & 0\% & \multirow{2}{*}{100\%} & \multirow{2}{*}{\textbf{3\%}} \\
 &  & arrow & \textbf{97\%} & 0\% &  &  \\ \hline
\multirow{12}{*}{\begin{tabular}[c]{@{}c@{}}Fake Names, Real Locations\\ $H$\end{tabular}} & \multirow{2}{*}{\begin{tabular}[c]{@{}l@{}}Qwen2.5-\\ 3B-Instruct\end{tabular}} & uniform & 19.6\% & 4.9\% & \multirow{2}{*}{100\%} & \multirow{2}{*}{4.9\%} \\
 &  & arrow & 52\% & 5.9\% &  &  \\ \cline{2-7} 
 & \multirow{2}{*}{\begin{tabular}[c]{@{}l@{}}Qwen2.5-\\ 7B-Instruct\end{tabular}} & uniform & 9.8\% & 2.9\% & \multirow{2}{*}{100\%} & \multirow{2}{*}{2.9\%} \\
 &  & arrow & 67.6\% & 3.9\% &  &  \\ \cline{2-7} 
 & \multirow{2}{*}{\begin{tabular}[c]{@{}l@{}}Qwen2.5-\\ 14B-Instruct\end{tabular}} & uniform & 28.4\% & \textbf{7.8\%} & \multirow{2}{*}{100\%} & \multirow{2}{*}{\textbf{8.8\%}} \\
 &  & arrow & 83.3\% & \textbf{7.8\%} &  &  \\ \cline{2-7} 
 & \multirow{2}{*}{\begin{tabular}[c]{@{}l@{}}DeepSeek-R1-\\ Distill-Qwen-7B\end{tabular}} & uniform & 10.2\% & 5.1\% & \multirow{2}{*}{100\%} & \multirow{2}{*}{3.1\%} \\
 &  & arrow & 87.7\% & 6\% &  &  \\ \cline{2-7} 
 & \multirow{2}{*}{\begin{tabular}[c]{@{}l@{}}DeepSeek-R1-\\ Distill-Qwen-14B\end{tabular}} & uniform & 9.8\% & 6\% & \multirow{2}{*}{100\%} & \multirow{2}{*}{5.8\%} \\
 &  & arrow & 91.5\% & 7.2\% &  &  \\ \cline{2-7} 
 & \multirow{2}{*}{\begin{tabular}[c]{@{}l@{}}DeepSeek-R1-\\ Distill-Llama-70B\end{tabular}} & uniform & 42.2\% & 3.9\% & \multirow{2}{*}{100\%} & \multirow{2}{*}{7.7\%} \\
 &  & arrow & \textbf{100\%} & 2.9\% &  &  \\ \hline
\multirow{12}{*}{\begin{tabular}[c]{@{}c@{}}Real Names, Real Locations\\ $R$\end{tabular}} & \multirow{2}{*}{\begin{tabular}[c]{@{}l@{}}Qwen2.5-\\ 3B-Instruct\end{tabular}} & uniform & 10.8\% & 3.9\% & \multirow{2}{*}{100\%} & \multirow{2}{*}{7.8\%} \\
 &  & arrow & 73.5\% & 1\% &  &  \\ \cline{2-7} 
 & \multirow{2}{*}{\begin{tabular}[c]{@{}l@{}}Qwen2.5-\\ 7B-Instruct\end{tabular}} & uniform & 16.7\% & 6.8\% & \multirow{2}{*}{100\%} & \multirow{2}{*}{8.8\%} \\
 &  & arrow & \textbf{100\%} & 3.9\% &  &  \\ \cline{2-7} 
 & \multirow{2}{*}{\begin{tabular}[c]{@{}l@{}}Qwen2.5-\\ 14B-Instruct\end{tabular}} & uniform & 23.5\% & 7.8\% & \multirow{2}{*}{100\%} & \multirow{2}{*}{8.9\%} \\
 &  & arrow & \textbf{100\%} & 7.8\% &  &  \\ \cline{2-7} 
 & \multirow{2}{*}{\begin{tabular}[c]{@{}l@{}}DeepSeek-R1-\\ Distill-Qwen-7B\end{tabular}} & uniform & 11.8\% & 3.9\% & \multirow{2}{*}{100\%} & \multirow{2}{*}{5.9\%} \\
 &  & arrow & 95.3\% & \textbf{8.8\%} &  &  \\ \cline{2-7} 
 & \multirow{2}{*}{\begin{tabular}[c]{@{}l@{}}DeepSeek-R1-\\ Distill-Qwen-14B\end{tabular}} & uniform & 25.5\% & 2.9\% & \multirow{2}{*}{100\%} & \multirow{2}{*}{9.8\%} \\
 &  & arrow & \textbf{100\%} & 3.9\% &  &  \\ \cline{2-7} 
 & \multirow{2}{*}{\begin{tabular}[c]{@{}l@{}}DeepSeek-R1-\\ Distill-Llama-70B\end{tabular}} & uniform & 56.9\% & 2.9\% & \multirow{2}{*}{100\%} & \multirow{2}{*}{\textbf{10.9\%}} \\
 &  & arrow & \textbf{100\%} & 5.9\% &  &  \\ \hline
\end{tabular}
\end{adjustbox}
\end{table}

\begin{table}[]
\centering
\caption{Impact of LoRA fine-tuning layer selection (MLP layers, Attention layers, or both) on the performance of combination methods, using Qwen2.5-14B-Instruct as the base model. Results are reported on dataset $F$.}
\label{tab:two-hop-finetuned-layers}
\resizebox{\textwidth}{!}{%
\begin{tabular}{lccccc}
\hline
\textbf{Fine-tuning Layers} & \textbf{\begin{tabular}[c]{@{}c@{}}Training Set\\  Mixture\end{tabular}} & \textbf{\begin{tabular}[c]{@{}c@{}}3-combination\\  library\end{tabular}} & \textbf{\begin{tabular}[c]{@{}c@{}}2-combination\\  library\end{tabular}} & \textbf{\begin{tabular}[c]{@{}c@{}}Oracle \\ Expert\end{tabular}} & \textbf{\begin{tabular}[c]{@{}c@{}}Mixed Two-hop \\ Expert\end{tabular}} \\ \hline
\multirow{2}{*}{MLP Layers} & uniform & 7\% & 6\% & \multirow{2}{*}{100\%} & \multirow{2}{*}{0\%} \\
 & arrow & \textbf{70\%} & \textbf{6\%} &  &  \\ \hline
\multirow{2}{*}{Attention layers} & uniform & 6\% & 0\% & \multirow{2}{*}{100\%} & \multirow{2}{*}{0\%} \\
 & arrow & 39\% & 0\% &  &  \\ \hline
\multirow{2}{*}{\begin{tabular}[c]{@{}l@{}}MLP + Attention\\  layers\end{tabular}} & uniform & 7\% & 4\% & \multirow{2}{*}{100\%} & \multirow{2}{*}{0\%} \\
 & arrow & 65\% & 5\% &  &  \\ \hline
\end{tabular}%
}
\end{table}

\begin{table}[h!]
\centering
\caption{Performance of different models and training set combinations across two experimental setups on the Bridge dataset. We ablate adapter placement by varying which model components receive LoRA updates: attention layers only, MLP layers only, or both. Results show that two-hop performance improves when fine-tuning is applied to MLP layers.}
\label{tab:bridge_dataset_performance}
\begin{adjustbox}{max width=\textwidth}
\begin{tabular}{llcccccc}
\hline
\multirow{2}{*}{\textbf{Model}} & \multicolumn{1}{c}{\multirow{2}{*}{\begin{tabular}[c]{@{}c@{}}\textbf{Training}\\  \textbf{Set Mixture}\end{tabular}}} & \multicolumn{3}{c}{\textbf{Setup 1}} & \multicolumn{3}{c}{\textbf{Setup 2}} \\ \cline{3-8} 
 & \multicolumn{1}{c}{} & \begin{tabular}[c]{@{}c@{}}\textbf{Attention} \\ \textbf{layers}\end{tabular} & \begin{tabular}[c]{@{}c@{}}\textbf{MLP}\\ \textbf{layers}\end{tabular} & \begin{tabular}[c]{@{}c@{}}\textbf{MLP+Attention}\\ \textbf{layers}\end{tabular} & \begin{tabular}[c]{@{}c@{}}\textbf{Attention} \\ \textbf{layers}\end{tabular} & \begin{tabular}[c]{@{}c@{}}\textbf{MLP}\\ \textbf{layers}\end{tabular} & \begin{tabular}[c]{@{}c@{}}\textbf{MLP+Attention}\\ \textbf{layers}\end{tabular} \\ \hline
Qwen2.5-3B-Instruct & uniform & 0\% & 0\% & 4.9\% & 6.9\% & 3.9\% & 6.9\% \\
 & arrow & 4\% & 5.1\% & 11.8\% & 6.9\% & 73.5\% & 41.2\% \\ \hline
Qwen2.5-7B-Instruct & uniform & 0\% & 3.9\% & 4.9\% & 2.9\% & 14.7\% & 8.8\% \\
 & arrow & 5\% & 6.9\% & 17.6\% & 10.8\% & 85.3\% & 93.1\% \\ \hline
Qwen2.5-14B-Instruct & uniform & 0\% & 0\% & 3.9\% & 3.9\% & 10.8\% & 6.9\% \\
 & arrow & 9.8\% & 10.8\% & 19.6\% & 33.3\% & 94.1\% & 90.2\% \\ \hline
DeepSeek-R1-Distill-Qwen-7B & uniform & 0\% & 1\% & 2\% & 4.9\% & 2\% & 9.8\% \\
 & arrow & 2\% & 42.2\% & 44.1\% & 14.7\% & 89.2\% & 90.2\% \\ \hline
DeepSeek-R1-Distill-Qwen-14B & uniform & 0\% & 3.9\% & 7.8\% & 2.9\% & 6.9\% & 11.8\% \\
 & arrow & 2.9\% & \textbf{64.7\%} & 24.5\% & 30.4\% & \textbf{95.1\%} & 94\% \\ \hline
 DeepSeek-R1-Distill-Llama-70B & uniform & 1\% & 16.7\% & 10.8\% & 2.9\% & 14.7\% & 15.7\% \\
 & arrow & 10.8\% & 60.4\% & 25.5\% & 25.5\% & 90.3\% & 94.1\% \\ \hline
\end{tabular}
\end{adjustbox}
\end{table}

\begin{table}[h!]
\centering
\caption{Evaluation of model and training set combinations across multiple configurations with varying bridge setups.}
\label{tab:bridge_ablation_detail}
\begin{adjustbox}{max width=\textwidth}
\begin{tabular}{clllcccccc}
\hline
\textbf{Setup} & \textbf{LoRA 1} & \textbf{LoRA 2} & \textbf{\begin{tabular}[c]{@{}l@{}}Training\\  mixture\end{tabular}} & \textbf{\begin{tabular}[c]{@{}c@{}}Qwen2.5-\\ 3B-Instruct\end{tabular}} & \textbf{\begin{tabular}[c]{@{}c@{}}Qwen2.5-\\ 7B-Instruct\end{tabular}} & \textbf{\begin{tabular}[c]{@{}c@{}}Qwen2.5-\\ 14B-Instruct\end{tabular}} & \textbf{\begin{tabular}[c]{@{}c@{}}DeepSeek-R1-\\ Distill-Qwen-7B\end{tabular}} & \textbf{\begin{tabular}[c]{@{}c@{}}DeepSeek-R1-\\ Distill-Qwen-14B\end{tabular}} & \textbf{\begin{tabular}[c]{@{}c@{}}DeepSeek-R1-\\ Distill-Llama-70B\end{tabular}} \\ \hline
\multirow{2}{*}{0} & \multirow{2}{*}{\textbf{$R_1$}} & \multirow{2}{*}{\textbf{$R_2$}} & Uniform & 3.9\% & 6.8\% & 7.8\% & 3.9\% & 2.9\% & 2.9\% \\
 &  &  & Arrow & 1\% & 3.9\% & 7.8\% & 8.8\% & 3.9\% & 5.9\% \\ \hline
\multirow{2}{*}{2} & \textbf{$R_1$} & \textbf{$R_2$} & Uniform & 3.9\% & 14.7\% & 10.8\% & 2\% & 6.9\% & 14.7\% \\
 & $F_1 + F_2 + F_{12}$ CoT ($B_F$) & $F_1 + F_2 + F_{12}$ CoT ($B_F$) & Arrow & \textbf{73.5\%} & \textbf{85.3\%} & \textbf{94.1\%} & \textbf{89.2\%} & \textbf{95.1\%} & \textbf{90.3\%} \\ \hline
\multirow{2}{*}{3} & \textbf{$R_1$} & \textbf{$R_2$} & Uniform & 2\% & 1\% & 2.9\% & 1\% & 6.9\% & 6.9\% \\
 & $F_1 + F_2 + F_{12}$ & $F_1 + F_2 + F_{12}$ & Arrow & 1\% & 2.9\% & 3.9\% & 3.9\% & 11.8\% & 3.9\% \\ \hline
\multirow{2}{*}{4} & \textbf{$R_1$} & \multirow{2}{*}{\textbf{$R_2$}} & Uniform & 2\% & 5.9\% & 4.2\% & 2.9\% & 2.9\% & 6.1\% \\
 & $F_1 + F_2 + F_{12}$ CoT ($B_F$) &  & Arrow & 9.8\% & 8.8\% & 11.8\% & 8.8\% & 12.7\% & 10.2\% \\ \hline
\multirow{2}{*}{5} & \multirow{2}{*}{\textbf{$R_1$}} & \textbf{$R_2$} & Uniform & 6.9\% & 2.9\% & 4.1\% & 2\% & 2\% & 3.2\% \\
 &  & $F_1 + F_2 + F_{12}$ CoT ($B_F$) & Arrow & 10.8\% & 35.3\% & 27.5\% & 25.9\% & 22.5\% & 26.1\% \\ \hline
\multirow{2}{*}{6} & \textbf{$R_1$} & \textbf{$R_2$} & Uniform & 3.9\% & 3.9\% & 4.5\% & 3.9\% & 2.9\% & 11.8\% \\
 & $F_{12}$ CoT & $F_{12}$ CoT & Arrow & 40.2\% & 61.8\% & 65.7\% & 75.5\% & 72.5\% & 87.3\% \\ \hline
\multirow{2}{*}{7} & \textbf{$R_1$} & \textbf{$R_2$} & Uniform & 4.9\% & 6.9\% & 5.2\% & 4.9\% & 5.9\% & 5.9\% \\
 & $F_1 + F_{12}$ CoT & $F_2 + F_{12}$ CoT & Arrow & 27.5\% & 75.5\% & 79.4\% & 74.5\% & 76.9\% & 89.9\% \\ \hline
\multirow{2}{*}{8} & \textbf{$R_1$} & \textbf{$R_2$} & Uniform & 2\% & 2.9\% & 3\% & 2\% & 7.8\% & 5.9\% \\
 & $F_1 + F_4 + F_{14}$ CoT & $F_5 + F_2 + F_{52}$ CoT & Arrow & 43.1\% & 82.4\% & 92.2\% & 81.6\% & 83.5\% & 87.9\% \\ \hline
\end{tabular}
\end{adjustbox}
\end{table}

\begin{table}[h!]
\centering
\caption{Performance of different 2-combination routings on the LLaMA-7B model using the $R$ dataset. CAT was trained on half of $R_{12}$ and tested on the remaining half.}
\label{tab:other-routings}
\begin{tabular}{lc}
\hline
\multicolumn{1}{c}{\textbf{Routing}} & \textbf{Accuracy} \\ \hline
uniform & 3\% \\ \hline
arrow & 3.9\% \\ \hline
TIES & 3.8\% \\ \hline
CAT & \textbf{21\%} \\ \hline
\end{tabular}
\end{table}

\subsection{Evaluating Larger Sets of LoRAs}
\label{app:larger_set}
\paragraph{Setup}

As illustrated in Figure~\ref{fig:8-loras-graph}, we constructed a controlled synthetic benchmark over three disjoint sets of entities: \( U_1 \), \( U_2 \), and \( U_3 \). Elements of these sets are denoted generically as \( \{u^i\}_1 \in U_1 \), \( \{u^j\}_2 \in U_2 \), and \( \{u^t\}_3 \in U_3 \), where the subscripts indicate the entity set and the superscripts identify specific elements.
We define five atomic relations:

\begin{itemize}
    \item \( Rel_1, Rel_3, Rel_5 \subseteq U_1 \times U_2 \)
    \item \( Rel_2, Rel_4 \subseteq U_2 \times U_3 \)
\end{itemize}

These relations define directed edges between entity sets. From them, we construct two-hop compositions of the form \( Rel_a \to Rel_b \), where \( Rel_a \in \{Rel_1, Rel_3, Rel_5\} \) and \( Rel_b \in \{Rel_2, Rel_4\} \), yielding composite relations from \( U_1 \to U_3 \). For example, a valid 2-hop path is:
\[
(u^i_1, Rel_a, u^j_2),\quad (u^j_2, Rel_b, u^t_3) \quad \Rightarrow \quad (u^i_1, Rel_a \circ Rel_b, u^t_3)
\]

\begin{figure}
    \centering
    \includegraphics[width=1\linewidth]{./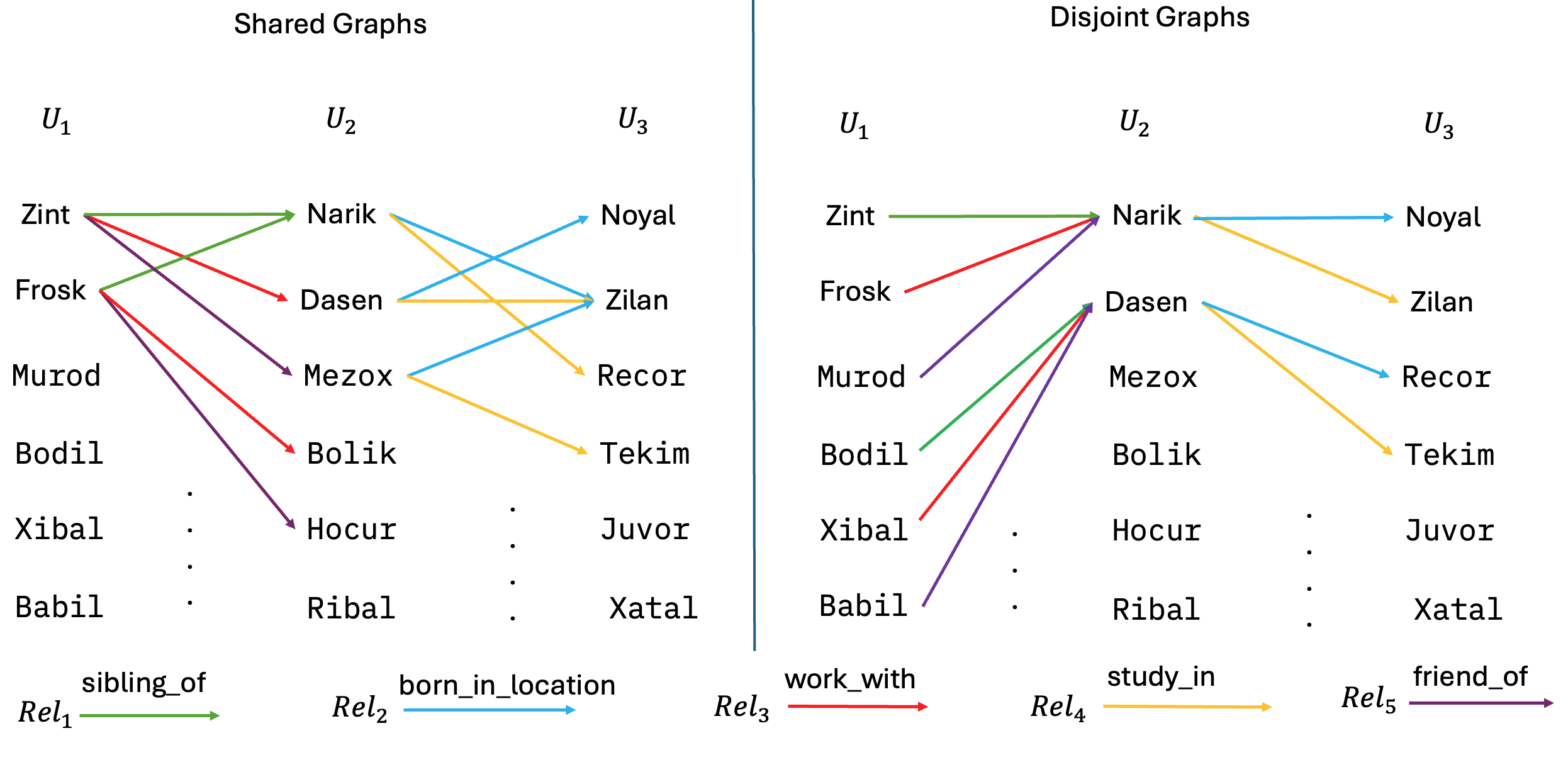}
    \caption{Visualization of Shared and Disjoint Graphs.}
    \label{fig:8-loras-graph}
\end{figure}

We explore two experimental setups:
\paragraph*{Disjoint Graphs}
In this setup, each graph is self-contained, and entities are uniquely assigned to individual triples. That is, if \( (u^i_1, Rel_1, u^j_2) \) exists, then neither \( u^i_1 \) nor \( u^j_2 \) appears in any other triple involving \( Rel_1, Rel_3, \) or \( Rel_5 \). The same constraint applies to relations \( Rel2 \) and \( Rel_4 \). Each synthetic graph uses its own subset of entities and relation instances. We train LoRA modules on the five atomic relations and five two-hop compositions (\( Rel_1 \to Rel_4, Rel_3 \to Rel_4, Rel_3 \to Rel_2, Rel_5 \to Rel_2, Rel_5 \to Rel_4\)), and evaluate on a held-out composition (\( Rel_1 \to Rel_2 \)).
\paragraph*{Shared Graphs}
In this setting, the same global pool of entities \( U_1, U_2, \) and \( U_3 \) is used across all triples, allowing entities to participate in multiple relation instances and compositions. The five atomic relations remain the same. Five two-hop compositions are used for training, and the remaining one is held out to evaluate whether routing over LoRA modules enables generalization to the novel composition \( Rel_1 \to Rel_2 \).

\paragraph{Results and analysis}
As shown in Table \ref{tab:graphs-result}, both shared and disjoint entity configurations under the graph topology exhibit poor performance. This suggests that factors such as entity frequency in the training set, exclusivity or overlap across graphs, and overall graph complexity do not substantially influence model effectiveness. Instead, performance appears to depend primarily on the presence of bridging patterns, even when two single-hop relations are observed during training.

\begin{table}[h!]
\centering
\caption{Performance Across Model Variants in Disjoint and Shared Graph Setups. Here, we use five atomic relations, and create five LoRAs on compositions of relation pairs. We combine these 10 LoRAs and test on a held-out two-hop relation pair. Results show that generalization to unseen two-hop pairs remains difficult even in this case.}
\label{tab:graphs-result}
\begin{adjustbox}{max width=\textwidth}
\begin{tabular}{llcc}
\hline
\textbf{Model} & \textbf{\begin{tabular}[c]{@{}l@{}}Training \\ Set Mixture\end{tabular}} & \textbf{Disjoint Graphs} & \textbf{Shared Graphs} \\ \hline
Qwen2.5-3B-Instruct & uniform & 0\% & 0.3\% \\
 & arrow & 0.8\% & 4.4\% \\ \hline
Qwen2.5-7B-Instruct & uniform & 0.3\% & 0.1\% \\
 & arrow & 0.8\% & 2.2\% \\ \hline
Qwen2.5-14B-Instruct & uniform & 2\% & 1\% \\
 & arrow & 5\% & 2.3\% \\ \hline
DeepSeek-R1-Distill-Qwen-7B & uniform & 0\% & 0.1\% \\
 & arrow & 2.3\% & 13.8\% \\ \hline
DeepSeek-R1-Distill-Qwen-14B & uniform & 2\% & 0\% \\
 & arrow & 3\% & 8\% \\ \hline
DeepSeek-R1-Distill-Llama-70B & uniform & 2\% & 2\% \\
 & arrow & \textbf{9\%} & \textbf{21.1\%} \\ \hline
\end{tabular}
\end{adjustbox}
\end{table}

\subsection{Detailed Experimental Setups, Findings, and Analyses for Easy-to-Hard Math Words Problems}
\label{sec:gsm-appendix}

\subsubsection{Experimental Setups}
\label{sec:gsm-expt}

\paragraph{Models.} The experiments were conducted using the Qwen2.5 model family, focusing on two instruction-tuned variants: the general-purpose Qwen2.5-Instruction model \cite{qwen2025qwen25technicalreport} and the domain-specialized Qwen2.5-Math-Instruct model \cite{yang2024qwen25mathtechnicalreportmathematical}, which is tailored for mathematical reasoning tasks. The Qwen2.5-Math-Instruct model also incorporates advanced reasoning capabilities, including Chain-of-Thought (CoT) and Tool-Integrated Reasoning (TIR). 

\paragraph{Dataset.} The GSM-Symbolic dataset \citep{mirzadeh2024gsmsymbolicunderstandinglimitationsmathematical} was synthesized from 100 randomly selected GSM8K\citep{cobbe2021trainingverifierssolvemath} test questions, which served as seed templates. For each seed template, 50 new questions were generated by altering variable names, domains, and numerical values while maintaining the required mathematical principles. Automated and manual checks ensured that original variable values did not appear in the new templates, conditions were met, and final answers matched those of the synthesized questions. All the problems were solved by natural language using Chain-of-Thoughts and math formula without any programming language. From GSM-Symbolic, two subsets were further synthesized for different difficulty levels as illustrated in Figure~\ref{fig:Gsm-Symbolic-P1-P2-Comp-Graph}: GSM-P1 and GSM-P2. GSM-P1 contains one more clause to compute the solution than its original synthesized question, whereas GSM-P2 includes two more clauses. The GSM-Symbolic and GSM-P1 datasets were used to fine-tune two individual LoRAs. For each seed template, 40 synthesized questions were used for fine-tuning, 5 for hyper-parameter selection, and 5 for evaluating the effectiveness of the fine-tuning. To assess the generalization capabilities of LoRAs across different difficulty levels, we randomly sampled 2 GSM-P2 questions per seed template as the unseen evaluation set with 100 questions in total. 

\paragraph{GSM-Symbolic Easy-to-Hard Question Example and Structure.}
\label{app:gsm_symbolic_computational_graph}
Figure~\ref{fig:Gsm-Symbolic-P1-P2-Comp-Graph} illustrates the progression in problem complexity across the GSM-Symbolic, GSM-P1, and GSM-P2 benchmarks, along with their corresponding computational graph. Each example question introduces incremental modifications to the original GSM-Symbolic question. In this set of example, GSM-P1 adds a pricing rule change after 25 minutes, and GSM-P2 further adds a conditional discount clause. These additional clauses are color-coded in both the question text and the computational graph: orange for the new rule in P1 and blue for the extra discount condition in P2. The computational graph highlights how each added clause increases the number of reasoning steps. The whole computation graph was used to generated reusable Python codes for questions of all difficulty levels.

\begin{figure}[ht]
    \centering
    \includegraphics[width=\textwidth]{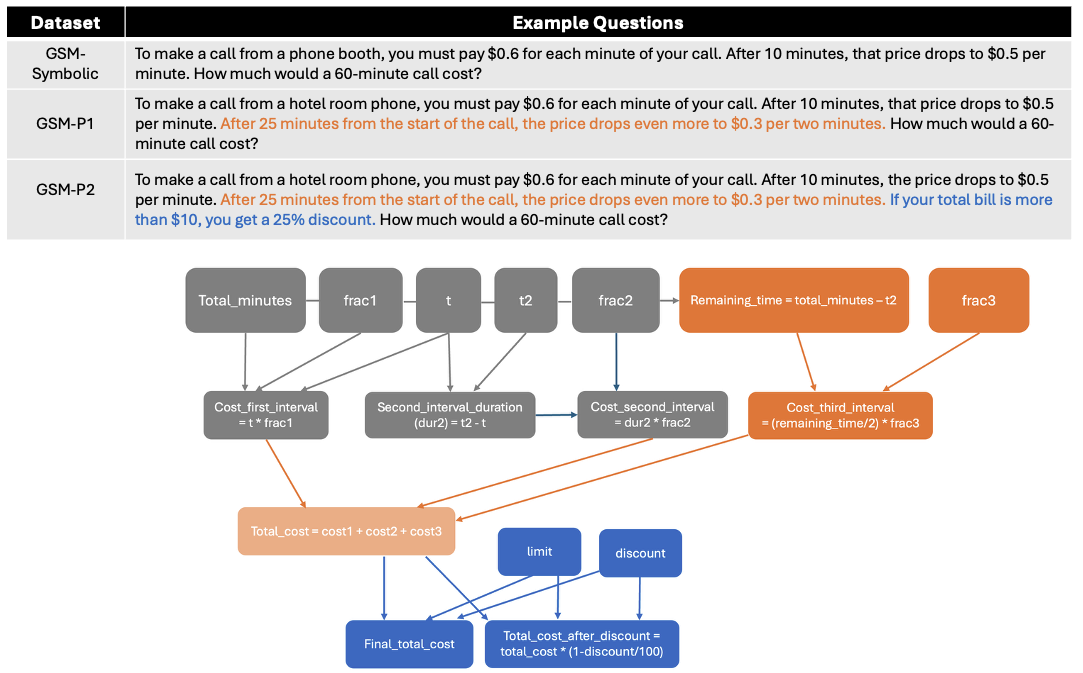}
    \caption{Example GSM-Symbolic P1 and P2 questions with corresponding computational graphs. Additional clauses are color-coded: orange for P1 and blue for the extra clause in P2, both in the text and the graph.}
    \label{fig:Gsm-Symbolic-P1-P2-Comp-Graph}
\end{figure}

\paragraph{Procedures for Synthesizing Reusable Code Solutions.}
\label{app:proc_synthesizing_reusable_codes}
Motivated by recent findings that reusing intermediate solutions enhances LLMs' ability to solve math problems \citep{suzgun2025dynamiccheatsheettesttimelearning}, along with insights from our 2-hop experiments, we hypothesize that synthesized reusable code solutions can help individually fine-tuned LoRAs generalize from easier to harder math word problems. To test this hypothesis, we used the AutoGen Python package \citep{wu2024autogen} to build two actor-critic agent-based workflows for generating reusable Markdown and Python code as fine-tuning dataset. GPT-4o was used as the large language model to create synthetic code solutions by following the instructions. The workflow and task instructions given to each agent are shown in Figure~\ref{fig:reusable_code_generation.png}.

\begin{figure}[ht]
    \centering
    \includegraphics[width=\textwidth]{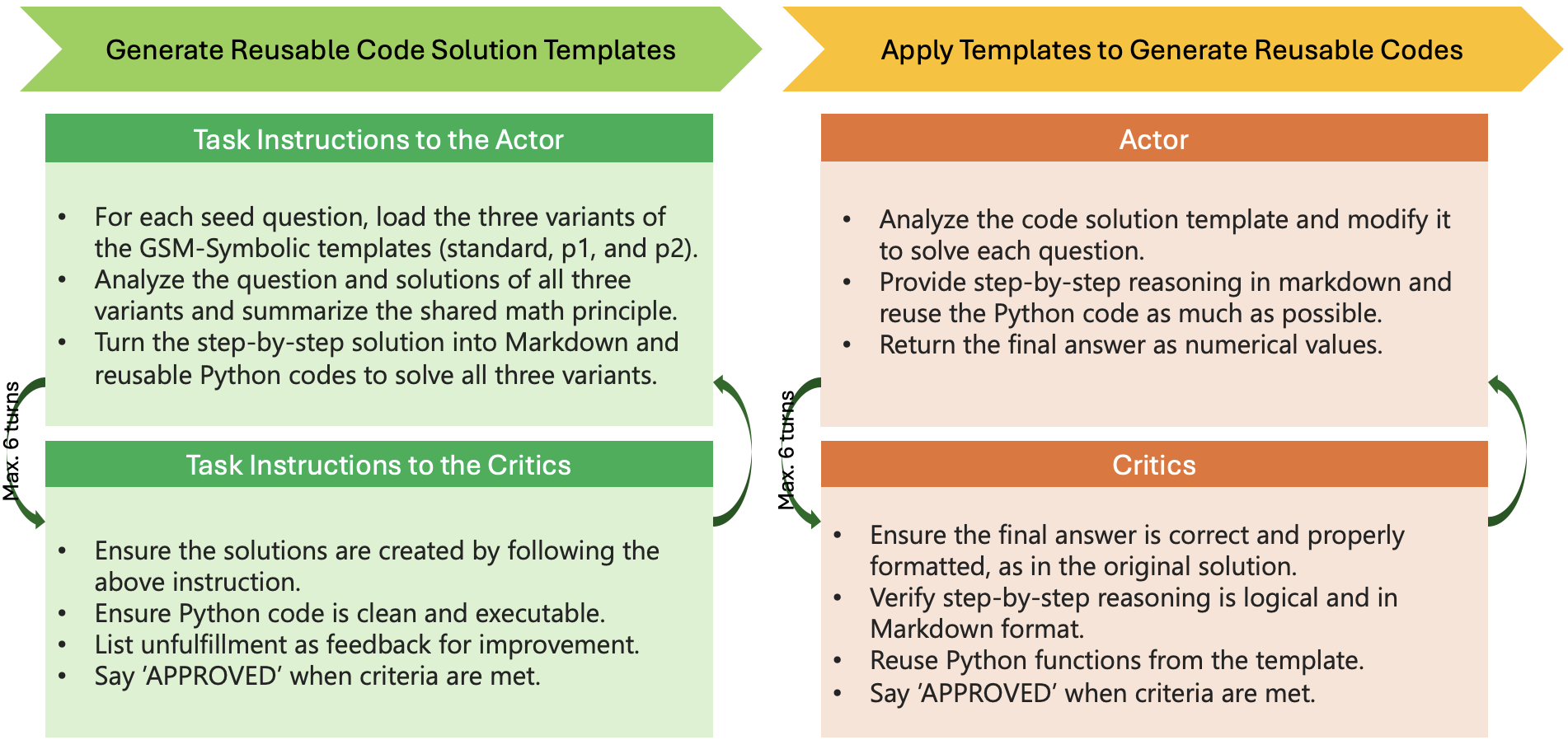}
    \caption{Two agent-based workflows for generating and applying reusable code templates. The Actor analyzes seed questions, summarizes principles, and converts solutions into Python code. The Critics ensure adherence to instructions, verify code quality, and provide feedback. Both roles iterate up to six times to refine and approve solutions.}
    \label{fig:reusable_code_generation.png}
\end{figure}

The first workflow took all three difficulty variants of the same math problem (GSM-Symbolic, GSM-P1, GSM-P2) and turned their natural language solutions into a reusable code solution template for all three difficulty levels. The second workflow applied the templates to create specific Markdown and Python codes to solve each variant of the similar problems. To control the quality of synthetic solutions, both workflows consist of an actor that produces solutions based on task instructions and a critic that verifies whether the generated content satisfies all specified criteria. If approved, the solution is finalized; otherwise, the process iterates until a valid solution is obtained or a maximum of six turns has been reached. 

\paragraph{LoRA Fine-Tuning and Evaluation Procedures.} 
\label{app:proc_gsm_fine_tuning}
For consistent comparison, we follow the setup described in \cite{ostapenko24_arrow} and use the \textit{mttl} Python package to train LoRA modules and evaluate different routing methods. Unless otherwise specified, the same training and evaluation procedures are applied across all experiments. We fine-tune LoRA with rank 16, a dropout rate of 0.05, and a learning rate of 1e-4. Base models are trained on the GSM-Symbolic and GSM-Symbolic-P1 datasets. The fine-tuning targeted modules of all the layers. Unless specified, the attention modules were fine-tuned. 

For evaluation, we adopt the standard protocol used in GSM8K and related math benchmarks, using 0-shot or 8-shot Chain-of-Thought (CoT) prompting with greedy decoding \cite{open-llm-leaderboard}. Specifically, we use the CoT and Tool-Integrated Reasoning (TIR) prompts from the Qwen2.5-Math repository \cite{yang2024qwen25mathtechnicalreportmathematical}. The CoT prompt supports consistent evaluation on GSM8K \citep{cobbe2021trainingverifierssolvemath} and reproduces reasoning fragility on GSM-Symbolic \cite{mirzadeh2024gsmsymbolicunderstandinglimitationsmathematical}, while the TIR prompt evaluates the model’s ability to solve math problems using Markdown and Python code. Example prompts are shown in Figure~\ref{fig:cot_tir_prompt.png}.

\begin{figure}[ht]
    \centering
    \includegraphics[width=\textwidth]{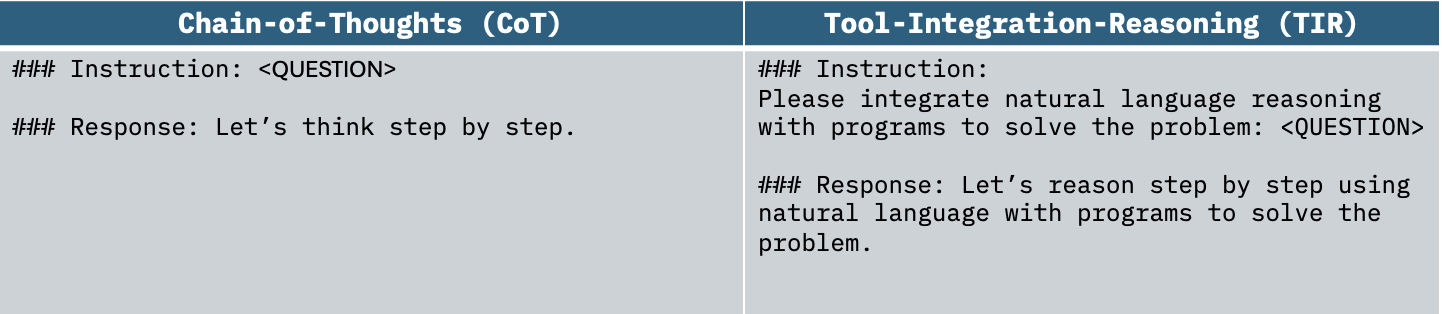}
    \caption{The Chain-of-Thoughts and Tool-Integrated Reasoning (TIR) prompts used for evaluation.}
    \label{fig:cot_tir_prompt.png}
\end{figure}

\subsubsection{Additional Findings and Analyses}
\label{sec:gsm-analysis-appendix}

\paragraph{The fragility of mathematical reasoning.} 
\label{sec:fragility-reasoning-appendix}
Table~\ref{tab:gsm8k-gsm-symbolic-benchmarks} illustrates that current LLMs lack robustness in mathematical reasoning. Under zero-shot Chain-of-Thought (CoT) prompting, Qwen2.5 models perform strongly on the original GSM8K benchmark, with the math-specialized 7B variant achieving 95.3\% accuracy. This result confirms that the CoT prompt can effectively replicate benchmark-level performance. However, performance declines significantly on the GSM-Symbolic benchmarks, which introduce minor variations in surface features such as names and numerical values. For instance, the Qwen2.5-Math-7B-Inst model drops from 95.3\% on GSM8K to 90.0\% on GSM-Symbolic, and further to 76.8\% and 68.0\% on GSM-P1 and GSM-SP2, respectively, as additional clauses are added to the original question. This trend reproduces a known limitation of current LLMs: their mathematical reasoning abilities are brittle when faced with slight perturbations in problem formulation \citep{mirzadeh2024gsmsymbolicunderstandinglimitationsmathematical}.

{\normalsize
\begin{table}[ht]
\caption{Fragility of mathematical reasoning under zero-shot CoT prompting on GSM8K vs. GSM-Symbolic benchmarks.}
\label{tab:gsm8k-gsm-symbolic-benchmarks}
\centering
\begin{adjustbox}{max width=\textwidth}
\begin{tabular}{@{}lccc@{}}
\toprule
\textbf{Tasks} & \textbf{Qwen2.5-1.5B-Inst} & \textbf{Qwen2.5-Math-1.5B-Inst} & \textbf{Qwen2.5-Math-7B-Inst} \\
\midrule

GSM8K & 65.1\% & 81.8\% & 95.3\% \\
GSM-Symbolic & 54.8\% & 75.8\% & 90.0\% \\
GSM-P1 & 32.0\% & 61.6\% & 76.8\% \\
GSM-P2 & 5.0\% & 47.0\% & 68.0\% \\
\bottomrule
\end{tabular}
\end{adjustbox}
\end{table}
}

\paragraph{The effect of in-context learning on LoRA routing.} 
\label{sec:in-context-learning}
Table~\ref{tab:gsm_lora_merge_performance_appendix} shows the effect of in-context learning on LoRA routing methods evaluated on the GSM-P2 benchmark. The in-context examples are taken from the GSM8K, as the standard benchmarking procedures \citep{open-llm-leaderboard, yang2024qwen25mathtechnicalreportmathematical}. In the baseline (non-LoRA) models, we observe that moving from 0-shot to 8-shot prompting yields little to no improvement, and in some cases, a slight degradation, in performance. For example, the Qwen2.5-Math-7B-Inst model drops from 0.68 (0-shot) to 0.42 (8-shot). This pattern aligns with prior findings reported in \citep{mirzadeh2024gsmsymbolicunderstandinglimitationsmathematical}, which demonstrate that in-context learning with GSM8K exemples offers limited benefit on the GSM-P2 benchmarks. Applying LoRA routing methods such as Uniform and Arrow, generally reduces performance compared to the base models, especially for the larger models. When using 8-shot in-context examples, all models show decreased accuracy overall, with the base models again outperforming the merged variants. This suggests that LoRA routing, in the current setup, may not effectively preserve or enhance model performance in compositional generalization tasks, and that in-context learning does not compensate for these declines.

{\normalsize
\begin{table}[ht]
\caption{Accuracy on GSM-P2 after routing LoRAs individually fine-tuned on GSM-Symbolic and GSM-P1, with and without in-context learning examples.}
\label{tab:gsm_lora_merge_performance_appendix}
\centering
\begin{adjustbox}{max width=\textwidth}
\begin{tabular}{@{}llccc@{}}
\toprule
\textbf{Number of In-Context Examples} & \textbf{LoRA Routing Methods} & \textbf{Qwen2.5-1.5B-Inst} & \textbf{Qwen2.5-Math-1.5B-Inst} & \textbf{Qwen2.5-Math-7B-Inst} \\
\midrule
\multirow{3}{*}{0-shot}

   & Base Model Only    & 5\% & 47\% & 68\% \\
   & Uniform           & 10\% & 24\% & 34\% \\
   & Arrow             & 9\%  & 19\% & 27\% \\
\midrule
\multirow{3}{*}{8-shot}

   & Base Model Only    & 0\%  & 26\% & 42\% \\
   & Uniform           & 8\%  & 19\% & 33\% \\
   & Arrow             & 5\%  & 18\% & 6\%  \\
\bottomrule
\end{tabular}
\end{adjustbox}
\end{table}
}

\paragraph{The Change of Base Model's Behavior After Routing LoRAs.}
\label{sec:error-analysis}
To analyze the impact of LoRA merging or routing on model behavior, we examined the Qwen2.5-Math-1.5B-Inst model’s zero-shot CoT outputs on the GSM-P2 evaluation set before and after routing. Using GPT-4o as an evaluator, we classified each solution as natural language, programming language, or unidentifiable. GPT-4o also assessed the correctness of each answer and, for incorrect outputs, categorized the error type as semantic misunderstanding, calculation error, or step-missing—following the definitions and prompting protocol of \citet{zhong2025achieving97gsm8kdeeply}. Errors that did not fit any of these categories were labeled as unidentifiable, while correct answers were assigned the label \textit{None}. To assess the reliability of GPT-4o's judgments, we measured its agreement with the exact-match metric on answer correctness.

Table \ref{tab:gpt4o_as_judge_response_analysis} shows that routing LoRAs trained on natural language solutions can hinder the base model’s ability to solve math problems with programming language (i.e., tool-integrated reasoning). Uniform and Arrow merge methods produced 100\% natural language solutions, while the base model originally generated 12\% code-based responses. Both routing methods created high calculation errors (56\%). These results suggest that while routing LoRAs trained on natural language solutions can impair domain-specialized math models' ability to solve problems using code. Thus, LoRA routing should consider the model's pretraining history and task-specific needs, as improper strategies could degrade its math problem-solving effectiveness.

{\normalsize
\begin{table}[ht]
\caption{Analysis of the type of generated solutions and errors using GPT-4o as the judge.}
\label{tab:gpt4o_as_judge_response_analysis}
\centering
\begin{adjustbox}{max width=\textwidth}
\begin{tabular}{@{}lccccccccc@{}}
\toprule
\multirow{2}{*}{\textbf{LoRA Routing Methods}} & \multicolumn{3}{c}{\textbf{Solution Types}} & \multicolumn{5}{c}{\textbf{Error Types}} & \multirow{2}{*}{\textbf{Agreement}} \\
\cmidrule(lr){2-4} \cmidrule(lr){5-9}
& Natural Lang. & Code & Unident. & Semantic & Calc. Error & Step-Miss & Unident. & None (Correct) & \\
\midrule

None    & 87\% & 12\% & 1\% & 16\% & 29\% & 6\% & 1\% & 48\% & 97\% \\
Uniform & 100\% & 0\% & 0\% & 11\% & 56\% & 9\% & 0\% & 24\% & 98\% \\
Arrow   & 100\% & 0\% & 0\% & 17\% & 56\% & 12\% & 0\% & 15\% & 96\% \\
\bottomrule
\end{tabular}
\end{adjustbox}
\end{table}
}

\end{document}